\pgfplotsset{compat=1.18}
\setlist{nosep}
\theoremstyle{remark}
\definecolor{takeawaybg}{HTML}{F2F7F7}
\definecolor{takeawaystroke}{HTML}{468985}
\definecolor{theorbg}{gray}{0.96}
\definecolor{theorframe}{gray}{0.40}
\newtcolorbox{contribbox}{
  enhanced,
  colback=green!5,
  colframe=green!40!black,
  boxrule=0pt,
  arc=0mm,
  left=2mm, right=2mm, top=1.5mm, bottom=1.5mm,
  borderline west={3pt}{0pt}{green!60!black},
}
\definecolor{uaiblue}{RGB}{0, 0, 120}
\theoremstyle{plain}
\newtheorem{theorem}{Theorem}[section]
\newtheorem{proposition}[theorem]{Proposition}
\newtheorem{lemma}[theorem]{Lemma}
\newtheorem{corollary}[theorem]{Corollary}
\theoremstyle{definition}
\newtheorem{definition}[theorem]{Definition}
\newtheorem{assumption}[theorem]{Assumption}
\theoremstyle{remark}
\newtheorem{remark}[theorem]{Remark}
\title{PASS: Certified Subset Repair for Classical and Quantum \\ Pairwise Constrained Clustering}
\author[1]{Pedro Chumpitaz-Flores}
\author[2]{My Duong}
\author[3]{Ying Mao}
\author[4]{Kaixun Hua}
\affil[1]{University of South Florida}
\affil[2]{Fordham University}
\begin{document}
\maketitle

\begin{abstract}
Pairwise-constrained clustering incorporates side information through must-link (ML) and cannot-link (CL) relations between samples. While these constraints can improve cluster quality, they complicate optimization at scale and limit quantum and hybrid approaches through the size of the encoded problem. PASS is a scalable framework for pairwise-constrained \(k\)-means that concentrates optimization on a small working subset while updating remaining assignments through re-centering.
Cannot-link feasibility under subset-restricted updates is formalized as a list-coloring problem on the induced constraint subgraph, yielding a checkable repair certificate with verifiable outcomes. The same subset restriction produces reduced classical subproblems and smaller quantum formulations, enabling a reduction-based hybrid evaluation under a simulation protocol. Infeasible constraint sets are handled explicitly: the pipeline returns a verifiable repair under stated conditions or reports residual conflicts under the same evaluation protocol. Across diverse benchmarks, PASS attains competitive SSE with lower runtime and returns solutions on instances where strong baselines do not finish within a fixed time budget.
\end{abstract}

\section{Introduction}

Clustering is a core task in unsupervised learning that seeks cohesive and well separated groups in data \citep{jain_data_2010}. A standard objective is the Minimum Sum of Squares Criterion (MSSC), which chooses assignments and centroids to minimize the sum of squared errors (SSE) \citep{spath_cluster_1980}. In many applications, purely unsupervised solutions may conflict with domain knowledge or operational constraints \citep{basu_constrained_2008,brieden_constrained_2017}. Pairwise supervision addresses this by imposing must-link (ML) and cannot-link (CL) relations, turning clustering into a semi supervised task that can improve interpretability and usefulness \citep{tian_model-based_2021,yang_analyzing_2022}.

Pairwise constraints also change the computational landscape. They disrupt the structure exploited by standard \(k\)-means updates, and constrained MSSC is NP hard \citep{basu_constrained_2008,brieden_constrained_2017}. Constraint sets may be inconsistent: after contracting ML components, the induced CL graph may admit no feasible \(K\)-coloring. This raises a guiding question: \emph{which points must be reconsidered to resolve cannot-link conflicts while leaving the remaining assignments fixed, and when can feasibility be certified under such restricted updates?}

\begin{figure}[t]
    \centering
    \includegraphics[width=0.95\linewidth]{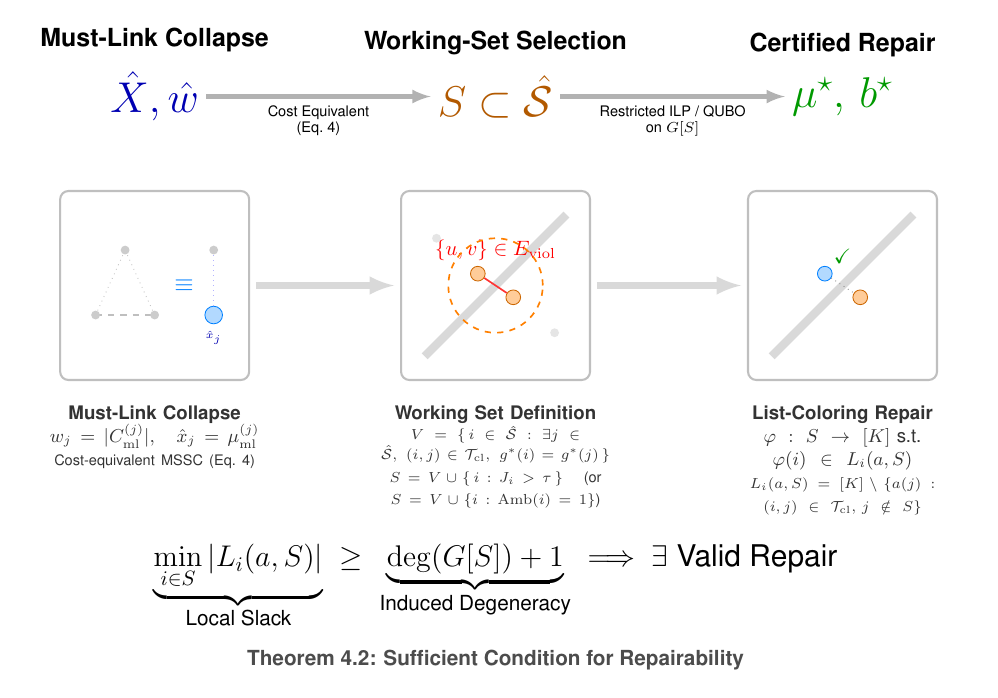}
    \vspace{-0.2cm}
    \caption{\textbf{The PASS framework: contraction, selection, and certified repair.}
    Phase 1 collapses ML components into weighted pseudo points, preserving the MSSC objective up to an additive constant (Eq.~4).
    Section~\ref{sec:subset} selects a working set \(S\) that contains all CL violations and ambiguity points.
    Section~\ref{sec:certified_repair} treats feasibility on \(S\) as a list coloring problem on the induced constraint graph \(G[S]\).
    The Local Slack Certificate (Thm.~\ref{thm:local_slack_main}) gives a sufficient condition for efficient repair based on subgraph degeneracy.}
    \label{fig:pass_framework}
\end{figure}

Existing methods address these issues with different trade offs. Heuristics can reduce violations but are sensitive to initialization and may fail to enforce feasibility on dense constraints \citep{lloyd_least_1982}. Variants refine the assignment step \citep{tan_improved_2010}, introduce constraint aware auxiliary centroids \citep{huang_semi-supervised_2008}, or enforce constraints after an initial clustering \citep{nghiem_constrained_2020}. Exact methods for MSSC and semi supervised variants prioritize solution quality but often scale poorly, and dense pairwise constraints can make instances intractable in practice \citep{aloise_np-hardness_2009,piccialli_exact_2022,chumpitaz2025scalable}.

Data reduction provides another route to improve scalability. Coresets compress the dataset into a smaller summary so that objective values on the summary approximate those on the full data \citep{schmidt2021coresets,huang2025coresets}. Related strategies concentrate computation on ambiguous points, either through soft penalties \citep{basu_active_2004} or by querying an oracle for a small number of additional labels \citep{xu2005active,xiong2016active}. Hard cannot link constraints complicate this picture because they introduce nonlocal dependencies that can increase sensitivity and inflate summary sizes in the worst case.

Reduction is also relevant for quantum and hybrid quantum classical pipelines. Shrinking the pairwise interaction graph reduces encoding and embedding overhead when dense logical couplings must be mapped onto sparse hardware topologies \citep{K_nz_2021}. This helps mitigate resource scaling limits on current devices \citep{Mirkarimi_2024,Glover2018}. Reduction strategies therefore provide a path to handle dense constraint graphs that challenge quantum annealing and related approaches \citep{Tomesh2021,Qu2022}.

\textbf{Contributions.}
PASS is a scalable framework for pairwise-constrained \(k\)-means based on subset-restricted optimization and certified repair.

\begin{itemize}[leftmargin=*, topsep=2pt, itemsep=2pt, parsep=0pt]
\item \textbf{Certified repair:} Subset-restricted updates under cannot-link constraints are framed as a repair problem. When only a working subset is allowed to change, feasibility on the induced constraint subgraph is treated as a list-coloring problem with a checkable certificate and a verifiable witness under sufficient conditions.

\item \textbf{Framework and reductions:} PASS combines must-link collapse into a weighted MSSC reduction (Eq.~\ref{eqn:overall_unc}), working-set selection, and certified subset repair for cannot-link feasibility. The same subset restriction yields reduced subproblems, including an \(\mathcal{O}(|S|k)\) QUBO formulation.

\item \textbf{Empirical evidence:} Benchmarks show competitive SSE with reduced runtime, and effectiveness when strong baselines do not finish within a fixed time budget. Certificate coverage is reported in Appendix~\ref{app:empirical_certificate_coverage}, and a reduction-enabled hybrid quantum pipeline is evaluated under a simulation protocol.
\end{itemize}

\subsection{Literature Review}
\label{sec:related_works}

This section reviews prior work on pairwise-constrained MSSC, scalable reductions, methods that focus computation on uncertain assignments, and constrained quantum clustering.

\textbf{Pairwise-constrained MSSC.}
Constrained \(k\)-means \citep{wagstaff_constrained_2001} extends Lloyd-style updates with must-link (ML) and cannot-link (CL) relations, but feasibility and initialization remain difficult in practice \citep{piccialli_exact_2022,xu_power_2019}. Heuristic variants include ICOP \(k\)-means \citep{tan_improved_2010,rutayisire_modified_2011}, MLC \(k\)-means \citep{huang_semi-supervised_2008}, and post-processing schemes that enforce constraints after an initial clustering \citep{nghiem_constrained_2020}. Exact formulations improve solution quality but scale poorly as the constraint graph becomes dense \citep{piccialli_exact_2022}.

\textbf{Scalability through reduction.}
Coresets and subsampling reduce dependence on the dataset size \(n\) while preserving constrained clustering objectives. \citet{bhattacharya2018faster} give a small candidate list of center sets, one of which achieves a \((1+\varepsilon)\) approximation. In low-dimensional Euclidean settings, \citet{schmidt2021coresets} construct constraint-agnostic coresets that preserve oracle-checkable pairwise constraints up to \((1+\varepsilon)\). \citet{braverman2021efficient} derive a uniform-sampling reduction to ring instances with summaries whose size can be independent of \(n\). Hard CL constraints, however, introduce nonlocal dependencies that can increase sensitivity and enlarge the required summary.

\textbf{Focused updates and repair.}
Penalty-based methods such as PCKMeans encode ML and CL terms in the objective, allowing controlled violations while encouraging feasibility \citep{basu_active_2004,davidson_clustering_2005}. Active-query methods reduce supervision by requesting labels for uncertain points and reclustering with the new constraints \citep{xu2005active,xiong2016active}. Repair-based methods instead act directly on violations. PCCC adjusts candidate labels for high-penalty points and propagates updates through the constraint graph, with reported scaling to \(60{,}000\) samples \citep{baumann_algorithm_2024}. Since strict and penalty-based methods optimize different tradeoffs, evaluations often report both SSE and the number of violated constraints.

\textbf{Quantum and hybrid constrained clustering.}
Pairwise constraints are difficult to encode in quantum and hybrid clustering because dense couplings increase embedding and penalty overhead on near-term hardware. Most quantum clustering work studies unconstrained \(k\)-means, and only a small set of annealer-based studies include explicit ML and CL structure \citep{seong2025hamiltonian,cohen2020ising}. Reducing optimization to a smaller constrained subproblem can make QUBO formulations more tractable and simplify penalty selection relative to full encodings.
\section{Pairwise Constrained $k$-Means} 

\subsection{Problem Formulation} 
Given a dataset \(X=\{x_1,\ldots,x_n\}\subset\mathbb{R}^D\) with $n$ samples and \(D\) features, the MSSC problem with pairwise constraints aims to find \(K\) clusters that minimize the Sum of Squared Errors (SSE) while satisfying must-link (ML) and cannot-link (CL) constraints:
\begingroup
  \setlength{\jot}{1pt}
  \begin{subequations}\label{eqn:obj}
    \begin{alignat}{3}
      &\min_{\mu,b}
        &&\sum_{i\in\mathcal S}\sum_{k\in\mathcal K}b_{i,k}\,\|x_i-\mu_k\|^2\\
      &\text{s.t. }
        &&\sum_{k\in\mathcal K} b_{i,k}=1,
          &&\forall\,i\in\mathcal S,\label{eqn:obj:a}\\
      &{}
        &&b_{i,k}=b_{i',k},
          &&\forall\,(i,i')\in\mathcal T_{ml},\,k\in\mathcal K,\label{eqn:obj:b}\\
      &{}
        &&b_{i,k}+b_{i',k}\le1,
          &&\forall\,(i,i')\in\mathcal T_{cl},\,k\in\mathcal K,\label{eqn:obj:c}\\
      &{}
        &&b_{i,k}\in\{0,1\},
          &&\forall\,i\in\mathcal S,\,k\in\mathcal K. \label{eqn:obj:d}
    \end{alignat}
  \end{subequations}
\endgroup
\noindent 
Here \(\mathcal{S}:=\{1,\cdots,n\}\) is the sample index set, \(\mathcal{K}:=\{1,\cdots,K\}\) is the cluster index set, and \(\mu=[\mu_1,\cdots,\mu_K]\) with \(\mu_k\in\mathbb{R}^D\) denotes the cluster centers. The binary variable $b_{i,k}$ is 1 when $x_i$ is assigned to cluster $k$ and 0 otherwise. The sets $\mathcal{T}_{ml}\subseteq\mathcal{S}\times\mathcal{S}$ and $\mathcal{T}_{cl}\subseteq\mathcal{S}\times\mathcal{S}$ specify pairs that must or must not be in the same cluster, respectively.

\medskip\noindent
% \paragraph{Auxiliary distance form.}
For later use, it is convenient to introduce auxiliary variables that represent assigned squared distances. The pairwise-constrained MSSC can thus be written as:
\begin{subequations}\label{eqn:overall}
\begin{align}
\min_{\mu,\delta,b}\;&\sum_{i\in\mathcal{S}} \delta_{i,*}\label{eqn:overall:obj}\\[2pt]
\text{s.t. }\;
&-M(1-b_{i,k})\le \delta_{i,*}-\delta_{i,k} \nonumber \\
&\quad \le M(1-b_{i,k}),
\quad \forall i\in\mathcal S,\,k\in\mathcal K, \label{eqn:overall:bigM}\\[2pt]
&\delta_{i,k}\ge\|x_i-\mu_k\|^2_2,
\quad \forall i\in\mathcal S,\,k\in\mathcal K,\label{eqn:overall:dis}
\\[2pt]
& \mbox{Constraints~\ref{eqn:obj:a}--\ref{eqn:obj:d}.}
\end{align}
\end{subequations}
Here $\delta_{i,k}$ lower-bounds the squared distance between $x_i$ and $\mu_k$, $\delta_{i,*}$ is the squared distance from $x_i$ to its assigned centroid, and $M$ is a big-$M$ constant. Define $\delta_i=[\delta_{i,1},\dots,\delta_{i,K},\delta_{i,*}]$, $\delta=[\delta_1,\dots,\delta_n]$, $b_i=[b_{i,1},\dots,b_{i,K}]$, $b=[b_1,\dots,b_n]$. Constraint~\eqref{eqn:overall:bigM} links $\delta_{i,*}$ and $\delta_{i,k}$ when $b_{i,k}=1$. Moreover, \eqref{eqn:overall:dis} is tight for the assigned cluster at optimality because $\delta_{i,*}$ is minimized and linked to $\delta_{i,k}$ when $b_{i,k}=1$.

In particular, when centers $\mu$ are fixed (as in the assignment step), the costs $\|x_i-\mu_k\|_2^2$ are constants and \eqref{eqn:overall} reduces to a purely discrete assignment problem under ML/CL constraints. In this case, one may choose, e.g.,
\[
M \;\ge\; \max_{i\in\mathcal S,\,k\in\mathcal K}\|x_i-\mu_k\|_2^2,
\]
so that \eqref{eqn:overall:bigM} correctly enforces $\delta_{i,*}=\delta_{i,k}$ for the assigned $k$.

\medskip\noindent

\begin{remark}[Infeasibility and violations]
The hard constraint formulation \eqref{eqn:obj} may be infeasible due to inconsistent constraints. In experiments, we report the violation count when methods yield approximate or time-limited solutions. The certificates developed later apply when the frozen outside assignment is feasible; otherwise, we report remaining violations and/or obstruction evidence.
\end{remark}

\subsection{Cost-Equivalent MSSC with Must-Link Collapse}
\label{sec:ml-collapse}

We adopt the must-link collapse from \cite{chumpitaz2025scalable}. In our
implementation, each must-link component is represented by a single weighted
pseudo-point, rather than by repeated pseudo-samples.

Let $\mathcal{C}=\{x_1,\dots,x_p\}$ be a cluster with centroid $\mu$. Assume
$x_1,\dots,x_t$ form a must-link component $\mathcal{C}_{ml}\subseteq\mathcal{C}$.
Define
\[
\mu_{ml}=\frac{1}{t}\sum_{i=1}^t x_i,
\qquad
\mathrm{tr}(\Sigma_{ml})=\frac{1}{t-1}\sum_{i=1}^t \|x_i-\mu_{ml}\|^2,
\]
and set $\mathrm{tr}(\Sigma_{ml})=0$ when $t=1$. Let $\hat{\mathcal{C}}$ be
obtained by replacing $\mathcal{C}_{ml}$ with a pseudo-representation at
$\mu_{ml}$. Then
\begin{equation}
\label{eq:ml_sse_decomp}
\mathsf{sse}_{\mathcal{C}}(\mu)
=
\mathsf{sse}_{\hat{\mathcal{C}}}(\mu)
+
(t-1)\mathrm{tr}(\Sigma_{ml}),
\end{equation}
% as shown in \cite{chumpitaz2025scalable}.

Let $\{\mathcal{C}^{(j)}_{ml}\}_{j\in J}$ be the must-link components in $X$ with
sizes $t_j$, centroids $\mu^{(j)}_{ml}$, and traces $\mathrm{tr}(\Sigma^{(j)}_{ml})$.
We form the collapsed dataset
\[
\hat X=(X\setminus R)\cup\bigcup_{j\in J}\{(\mu^{(j)}_{ml},w_j=t_j)\},
\qquad
R=\bigcup_{j\in J}\mathcal{C}^{(j)}_{ml}.
\]
Let $\hat{\mathcal{S}}$ index $\hat X$, and define weights $w_i$ for all $i\in\hat{\mathcal S}$ (with $w_i=1$ for original points and $w_i=t_j$ for pseudo-points). CL constraints induce relations between collapsed representatives; denote the induced set by $\hat{\mathcal T}_{cl}\subseteq \hat{\mathcal S}\times \hat{\mathcal S}$.

The resulting cost-equivalent problem is
\begin{subequations}
\label{eqn:overall_unc}
\begin{align}
\min_{\mu,\delta,b}\quad
& \sum_{i\in\hat{\mathcal{S}}} w_i\, \delta_{i,*}
+\sum_{j\in J}(t_j-1)\mathrm{tr}(\Sigma^{(j)}_{ml}),
\label{eqn:overall_unc:obj}
\\
\text{s.t.}\quad
& \text{Constraints~\ref{eqn:overall:bigM}--\ref{eqn:overall:dis} (with $i\in\hat{\mathcal S}$),}
\nonumber\\
& \text{and Constraints~\ref{eqn:obj:a}, \ref{eqn:obj:c}, \ref{eqn:obj:d} (with $\mathcal T_{cl}$ replaced by $\hat{\mathcal T}_{cl}$).}
\label{eqn:overall_unc:con}
\end{align}
\end{subequations}
where ML constraints are removed by construction (each ML component is represented by a single pseudo-point). The second term in \eqref{eqn:overall_unc:obj} is constant with respect to
$(\mu,b)$, so minimizers of \eqref{eqn:overall_unc} correspond to minimizers of
the ML-only restriction of Problem~\eqref{eqn:overall} under the
component-to-pseudo mapping.

\begin{remark}[Mixed ML and CL constraints]
We contract ML components to form $\hat{X}$. A CL constraint within an ML component indicates infeasibility, detectable during contraction. Valid CL constraints induce relations between components in $\hat{X}$ and define $\hat{\mathcal T}_{cl}$. We apply subset restriction to points with CL interactions or uncertain assignments.
\end{remark}

\section{Subset Selection Techniques}
\label{sec:subset}

We reduce the effective problem size by focusing computation on ambiguous assignments and on points involved in cannot-link conflicts. Many points keep stable labels under the current centers, such as points far from decision boundaries. Most cost concentrates near boundaries and around violated constraints. PASS therefore optimizes a working subset and preserves cannot-link feasibility across the subset boundary by construction; feasibility within the working set is handled in Section~\ref{sec:certified_repair}. PASS-CA is most effective when cannot-link violations are common, while PASS-IG is most effective when ambiguity is diffuse and violations are few.

\subsection{Constraint-Aware Subset Selection}
\label{ssec:basic_subset}

We focus computation on a subset \(S \subset \hat{\mathcal{S}}\) with \(|S| \ll |\hat{\mathcal{S}}|\) and preserve cannot-link feasibility when assignments outside \(S\) are frozen. Let \(g^*\) denote a baseline assignment, such as the unconstrained \(k\)-means assignment under the current centers.

\begin{definition}[Ambiguity and Violation Sets]
\label{def:amb_violation_sets}
For each point \(x_i\) with \(i \in \hat{\mathcal{S}}\), define the signed margin
\[
m(i) \;=\; \min_{g \neq g^*(i)} \Big( \,\|x_i - \mu_{g^*(i)}\|^2 \;-\; \|x_i - \mu_{g}\|^2 \,\Big).
\]
If \(m(i) > 0\), then some alternative cluster is closer than the current one under \(g^*\). If \(m(i) < 0\), then the current cluster is closer than any alternative. Values near zero indicate a near tie. For a threshold \(\tau \ge 0\), define
\[
\mathsf{Amb}(i) \;=\; \mathbb{I}\big[\, m(i) > -\tau \,\big],
\]
so that points with \(m(i) > 0\) and near-boundary points with \(-\tau < m(i) \le 0\) are included.

The cannot-link violation set under \(g^*\) is
\begin{align*}
    V = \Big\{& i \in \hat{\mathcal{S}} : \exists j \in \hat{\mathcal{S}} \text{ such that } \\
    & (i,j) \in \mathcal{T}_{cl} \text{ and } g^*(i) = g^*(j) \Big\}.
\end{align*}
\end{definition}

\paragraph{Threshold selection.}
We set \(\tau\) from the empirical distribution \(\{m(i)\}_{i\in\hat{\mathcal{S}}}\) to include boundary points while controlling subset size. Details are given in Appendix~\ref{app:threshold}.

\begin{figure}[htbp]
    \centering
    \includegraphics[width=0.8\linewidth]{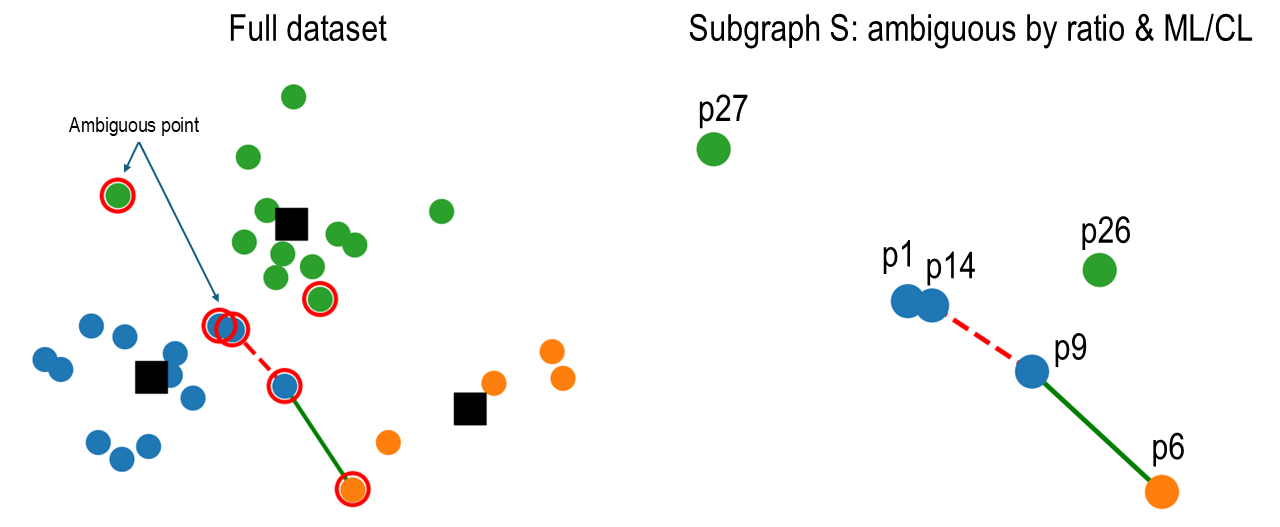}
    \caption{Subset selection with \(K = 3\). Left: ambiguous points in red and cannot-links as dashed red edges. Right: induced subproblem on \(S\) with the constraint graph.}
    \label{fig:clustering_sub}
\end{figure}

\begin{definition}[Reduced Working Subset]
\label{def:working_subset}
We select
\[
S \;=\; V \;\cup\; \left\{\, i \in \hat{\mathcal{S}} \;:\; \mathsf{Amb}(i) = 1 \,\right\}.
\]
\end{definition}

On \(S\), we freeze assignments on \(\hat{\mathcal{S}}\setminus S\) to \(g^*\) and solve the restricted assignment problem
\begin{align}
\min_{z} \quad & \sum_{i \in S} \sum_{g \in \mathcal{K}} \Delta_{i,g} \, z_{i,g}
\\
\text{s.t.} \quad
& \sum_{g \in \mathcal{K}} z_{i,g} = 1, \quad \forall i \in S \label{eq:onehot} \\
& z_{u,g} + z_{v,g} \le 1, \quad \forall (u,v) \in \mathcal{T}_{cl} \cap (S \times S), \ \forall g \in \mathcal{K} \label{eq:internal_cl} \\
& z_{u,g^*(v)} = 0, \quad \forall (u,v) \in \mathcal{T}_{cl} \text{ with } u \in S,\ v \notin S \label{eq:external_cl} \\
& z_{i,g} \in \{0,1\}, \quad \forall i \in S,\ \forall g \in \mathcal{K},
\end{align}
where \(\Delta_{i,g} = \|x_i - \mu_g\|^2 - \|x_i - \mu_{g^*(i)}\|^2\) and \(\Delta_{i,g^*(i)} = 0\).
A reference implementation is provided in Appendix~\ref{app:algorithms}. We restrict the label domain to candidate sets \(K_i \subseteq \mathcal{K}\) to reduce the number of binaries (Appendix~\ref{sec:0_1_ILP}); the boundary constraints \eqref{eq:external_cl} are enforced with respect to the frozen labels.

\begin{proposition}[Feasibility preservation across the subset boundary]
\label{prop:boundary_feas_preserve}
Any solution feasible for \eqref{eq:onehot}--\eqref{eq:external_cl}, combined with the frozen assignment \(g^*\) on \(\hat{\mathcal{S}}\setminus S\), yields a globally cannot-link feasible assignment.
\end{proposition}

\begin{proof}
By construction, every violated cannot-link edge under \(g^*\) has both endpoints in \(V\subseteq S\). Hence no cannot-link edge is violated entirely within \(\hat{\mathcal{S}}\setminus S\), so the frozen outside assignment is cannot-link feasible.

Constraints inside \(S\) are enforced by \eqref{eq:internal_cl}. For edges \((u,v)\in\mathcal{T}_{cl}\) with \(u\in S\) and \(v\notin S\), constraint \eqref{eq:external_cl} forbids assigning \(u\) to the frozen label \(g^*(v)\), so cross-boundary edges are satisfied. Therefore the combined assignment is globally cannot-link feasible.
\end{proof}

\subsection{Information Geometric Subset Selection}
\label{ssec:igass}

The selection above preserves boundary feasibility: when the restricted problem is feasible, its solution extends to a globally cannot-link feasible assignment with frozen outside labels. We also consider a budgeted selection rule based on uncertainty scores from soft assignments.

\begin{definition}[Fisher--Rao Ambiguity Score]
\label{def:fr_score}
Let \(p_i \in \Delta^{K-1}\) be a soft assignment vector, obtained from current centers via a softmax over squared distances with temperature \(T>0\),
\(p_i^{(g)} \propto \exp(-\|x_i-\mu_g\|^2/T)\).
Let \(g_1\) and \(g_2\) be the indices with the largest probabilities in \(p_i\), with deterministic tie breaking, and define
\[
q_i \;=\; \left( \frac{p_i^{(g_1)}}{p_i^{(g_1)} + p_i^{(g_2)}} \,,\;
\frac{p_i^{(g_2)}}{p_i^{(g_1)} + p_i^{(g_2)}} \right), \qquad
b \;=\; \left( \tfrac{1}{2} , \tfrac{1}{2} \right).
\]
Define
\begin{align*}
d_{\mathrm{FR}}(q_i , b) &= 2 \arccos \!\left( \frac{1}{\sqrt{2}} \left( \sqrt{q_{i1}} + \sqrt{q_{i2}} \right) \right), \\
J_i &= 1 - \frac{2}{\pi} \, d_{\mathrm{FR}}(q_i , b) \in [0,1].
\end{align*}
\end{definition}

\begin{definition}[Budgeted Information Geometric Subset]
\label{def:budgeted_ig_subset}
Given a budget \(m\) with \(m \ge |V|\), we select
\(
S^* \;=\; V \;\cup\; \{\, i_1 , \ldots , i_{\, m - |V|} \,\},
\)
where the indices \(i_1 , i_2 , \ldots\) belong to \(\mathcal{S} \setminus V\) and have the largest values of \(J_i\).
\end{definition}

On \(S^*\) we solve \eqref{eq:onehot}--\eqref{eq:external_cl}; only the subset rule changes, replacing margins with Fisher--Rao scores. Boundary feasibility preservation follows as in Proposition~\ref{prop:boundary_feas_preserve} because \(V \subseteq S^*\) and the boundary constraints are enforced. Details are given in Appendix~\ref{app:ig_details}.

Figure~\ref{fig:fisher_rao_subset} illustrates this selection process.

\begin{figure}[htbp]
    \centering
    \includegraphics[width=0.9\linewidth]{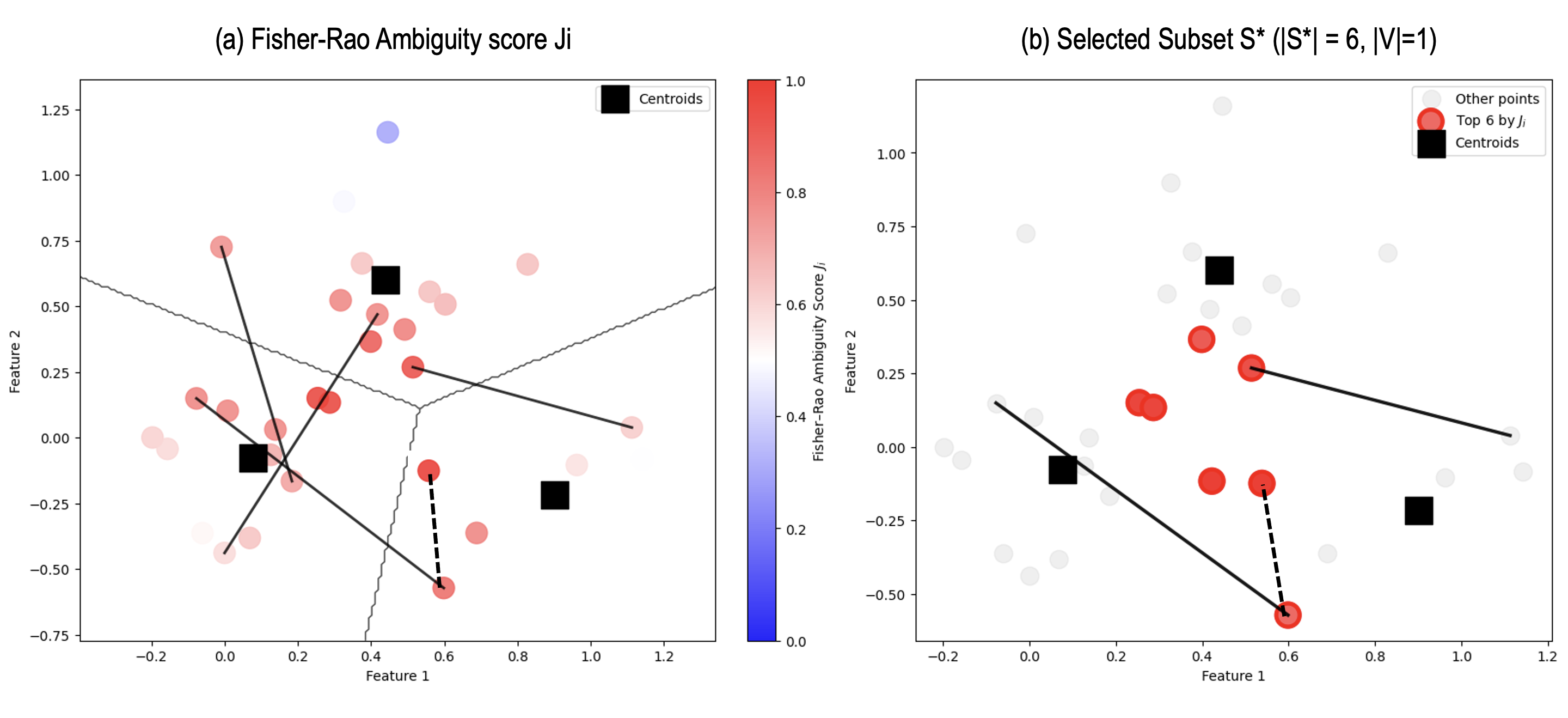}
    \caption{Ambiguity-based subset selection using Fisher--Rao geometry. \textbf{(a)} Fisher--Rao ambiguity scores $J_i$ for each data point (red = ambiguous, blue = certain). Centroids (black squares) and decision boundaries (gray contours) illustrate the clustering structure. Cannot-link constraints are represented by solid black lines, with a constraint violation shown as a dashed black line. \textbf{(b)} Selected subset $S^*$ ($|S^*|=6$, $|V|=1$) including the constraint violation and the most ambiguous points.}
    \label{fig:fisher_rao_subset}
\end{figure}

\section{Certified Reveal and Repair for Cannot-Link Constraints}
\label{sec:certified_repair}

PASS restricts optimization to a working (reveal) set \(S\), inducing a repair subproblem: assignments outside \(S\) are frozen and only points in \(S\) may change labels. The results in this section apply when the frozen outside assignment is cannot-link feasible. If a violated cannot-link edge lies entirely in \(\mathcal{S}\setminus S\), then no update restricted to \(S\) can remove that violation; in this case PASS reports the remaining violations and, when available, obstruction evidence.

Let \(G=(\mathcal{S},\mathcal{T}_{cl})\) be the cannot-link graph and let \(a:\mathcal{S}\to\mathcal{K}\) be a current assignment. Given a reveal set \(S\subseteq\mathcal{S}\), points in \(\mathcal{S}\setminus S\) are frozen to \(a\). For each \(i\in S\), define the labels allowed by frozen neighbors,
\[
L_i(a,S) \;=\; \mathcal{K}\setminus \{\, a(j) \;:\; (i,j)\in\mathcal{T}_{cl},\ j\notin S \,\}.
\]
For each edge \((i,j)\in\mathcal{T}_{cl}\) with \(i\in S\) and \(j\notin S\), forbidding the frozen label \(a(j)\) matches the boundary constraint \eqref{eq:external_cl} in Section~\ref{sec:subset}.

\paragraph{List coloring view.}
Repair corresponds to recoloring \(G[S]\) subject to frozen labels outside \(S\). The following lemma states the equivalence.

\begin{lemma}[Repairability as list coloring]
\label{lem:repair_list_main}
Assume the frozen assignment is feasible, meaning
\(a(u)\neq a(v)\) for all \((u,v)\in\mathcal{T}_{cl}\) with \(u,v\notin S\).
Then a globally feasible repaired assignment \(a'\) that differs from \(a\) only on \(S\) exists
if and only if the induced subgraph \(G[S]\) admits a proper list coloring
\(\phi:S\to\mathcal{K}\) with \(\phi(i)\in L_i(a,S)\) for all \(i\in S\).
\end{lemma}

\paragraph{A checkable repair certificate.}
A sufficient condition can be verified a posteriori on the reveal set and yields a repair and witness trace.

\begin{theorem}[Local slack certificate for repair]
\label{thm:local_slack_main}
Assume the frozen assignment outside \(S\) is feasible.
If
\[
\min_{i\in S} |L_i(a,S)| \;\ge\; \mathrm{deg}(G[S]) + 1,
\]
where \(\mathrm{deg}(G[S])\) denotes the degeneracy of \(G[S]\),
then a globally feasible repair exists and can be constructed by greedy list coloring along a degeneracy ordering.
Moreover, the certificate consists of the induced lists, a degeneracy peeling trace, and the greedy coloring trace, and can be checked independently.
\end{theorem}
Empirical coverage of this certificate, including how often it applies when repair is invoked and the per-call overhead, is reported in Appendix~\ref{app:empirical_certificate_coverage} (Table~\ref{tab:certificate_coverage_app}).
Appendix~\ref{app:certified_repair} provides certified lower bounds on minimum reveal cost, a posteriori exactness certificates when a lower bound becomes tight, and certified obstruction cores when repair fails under a stated budget.
\begin{table*}[htbp]
\caption{\textbf{Reduction (subset) closes the gap to SOTA}: PASS matches PCCC’s quality on classical pairwise-constrained clustering and scales to \(\mathbf{4\mathrm{M}+}\) points (CL, ML, Both; \(k=3\))}
\label{tab:classical_results}
\resizebox{\textwidth}{!}{
\centering
\small
\begin{tabular}{llccc|llccc}
\toprule
Dataset & Method & CL & ML & Both & Dataset & Method & CL & ML & Both \\
\midrule
Iris & COP-$k$-means &  102.03 
&  \underline{93.71}&  \textbf{86.72}& 
HTRU2    & COP-$k$-means & \multicolumn{3}{c}{No solution found (100 iterations)}\\
$n=150$ & BLPKM-CC & \textbf{84.24}& \textbf{83.72}& 86.83&
n = 17,898
& BLPKM-CC & 1.03E+08
& 1.36E+08& 1.44E+08\\
$d=4$ & PCCC&  \underline{84.35}&  \textbf{83.72} 
&  \underline{86.75}&
d = 8& PCCC& \textbf{9.21E+07}
& 1.23E+08& 1.33E+08\\
 & PASS-CA&  \textbf{84.24} 
&  \textbf{83.72} 
&  \underline{83.75}& 
& PASS-CA& 9.35E+07 (22)& \textbf{1.22E+08}& 1.33E+08 (18)\\
& PASS-IG&  \textbf{84.24} 
&  \textbf{83.72} 
&  \textbf{86.72}&
& PASS-IG& \underline{9.22E+07}& \textbf{1.22E+08}& \textbf{1.32E+08}\\
\midrule
Seeds & COP-$k$-means &  655.55 
&  658.05 
&  656.82 
&
AC
& COP-$k$-means & \multicolumn{3}{c}{No solution found (100 iterations)}\\
$n=210$ & BLPKM-CC & \underline{600.36}& \underline{629.29}& \underline{633.91}&
$n=53,413$
& BLPKM-CC & 2.54E+03
& \underline{2.38E+03}& 2.60E+03
\\
$d=7$ & PCCC&  \underline{600.36}&  \textbf{627.66} 
&  \textbf{633.25} 
&
$d=23$& PCCC& \textbf{2.24E+03}
& \textbf{2.02E+03}
& \textbf{2.38E+03}
\\
 & PASS-CA&  \underline{600.36}&  \textbf{627.66} 
&  634.88 
& & PASS-CA& 2.25E+03 (9)& \textbf{2.02E+03}
&2.51E+03
(20)\\
& PASS-IG&  \textbf{600.18} 
&  \textbf{627.66} 
&  \underline{633.91}&
& PASS-IG& \underline{2.28E+03}& \textbf{2.02E+03}& \underline{2.52E+03}\\
\midrule
Hemi& COP-$k$-means & 1.75E+07
& 2.44E+07
& N/A &
Skin& COP-$k$-means & \multicolumn{3}{c}{No solution found (100 iterations)}\\
n = 1,955& BLPKM-CC & 1.56E+07
& \underline{2.05E+07}& 2.36E+07
&
$n = 245,057$& BLPKM-CC & \underline{9.33E+08}& 1.60E+09& 1.63E+09\\
d = 7& PCCC& \textbf{1.36E+07}
& \textbf{1.81E+07}
& \textbf{2.10E+07}
&
d = 3& PCCC& \textbf{9.30E+08}& 1.61E+09& \textbf{1.60E+09}
\\
 & PASS-CA& 1.86E+07 & \textbf{1.81E+07} & 2.17E+07 (11)& 
 & PASS-CA& 1.02E+09& \textbf{1.56E+09}& 1.73E+09
\\
& PASS-IG& \underline{1.38E+07}& \textbf{1.81E+07}
& \underline{2.12E+07}&
& PASS-IG& \textbf{9.30E+08}& \textbf{1.56E+09}& \underline{1.61E+09}\\
\midrule
PR2392& COP-$k$-means & N/A & 3.68E+10
& N/A &
Gas\_methane& COP-$k$-means & \multicolumn{3}{c}{No solution found (100 iterations)}\\
n = 2,392& BLPKM-CC & \underline{2.60E+10}& \underline{3.40E+10}& \underline{3.77E+10}&
n = 4,178,504& BLPKM-CC & \multicolumn{3}{c}{No solution found (1h time limit)}\\
d = 2& PCCC& 2.77E+10
& \textbf{3.34E+10}
& \textbf{3.74E+10}
&
d = 18& PCCC& \multicolumn{3}{c}{No solution found (1h time limit)}\\
& PASS-CA& 2.59E+10 (3)& \textbf{3.34E+10} & 3.82E+10 (16)& 
& PASS-CA& \multicolumn{3}{c}{No solution found (1h time limit)}\\
& PASS-IG& \textbf{2.59E+10}
& \textbf{3.34E+10}
& 3.81E+10&
& PASS-IG& \textbf{3.13E+14}& \textbf{2.67E+14}& \textbf{4.56E+14}\\
\midrule
RDS\_CNT& COP-$k$-means & N/A & 8.51E+07
& N/A &
Gas\_CO& COP-$k$-means & \multicolumn{3}{c}{No solution found (100 iterations)}\\
n = 10,000& BLPKM-CC & 3.00E+07
& 6.83E+07
& 8.22E+07
&
n = 4,208,261& BLPKM-CC & \multicolumn{3}{c}{No solution found (1h time limit)}\\
d = 3& PCCC& \textbf{2.93E+07}
& 6.23E+07& \textbf{7.67E+07}
&
d = 18& PCCC& \multicolumn{3}{c}{No solution found (1h time limit)}\\
 & PASS-CA& 2.96E+07 (9)& \textbf{6.22E+07} & 8.18E+07 (66)&
 & PASS-CA& \multicolumn{3}{c}{No solution found (1h time limit)}\\
& PASS-IG& \underline{2.94E+07}& \textbf{6.22E+07}& \underline{7.93E+07}&
& PASS-IG& \textbf{3.71E+14}& \textbf{3.21E+14}& \textbf{1.78E+14}\\
\bottomrule
\end{tabular}
}
\begin{flushleft}\footnotesize
Notes: Parentheses indicate the number of violated constraints. Best per baseline is highlighted in \textbf{bold}, second best per baseline is \underline{underlined}. 1h time limit
\end{flushleft}
\end{table*}
% \begin{table*}[htbp]
% \centering
% \small
% \setlength{\tabcolsep}{5pt}
% \begin{tabular}{l l r r r r}
% \toprule
% Dataset & Type & Viol (IG) & SSE (IG) & Viol (CA) & SSE (CA) \\
% \midrule
% HTRU2\_L & BB & 0 & 131883987.353 & 18 & 132985998.289 \\
% HTRU2\_L & CL & 0 & 92231031.746 & 22 & 93458806.861 \\
% HTRU2\_L & ML & 0 & 122250083.570 & 0 & 122250083.570 \\
% hemi & BB & 0 & 21156368.885 & 11 & 21695165.851 \\
% hemi & CL & 0 & 13768111.988 & 0 & 18608102.886 \\
% hemi & ML & 0 & 18147019.924 & 0 & 18147019.924 \\
% iris & BB & 0 & 86.716 & 0 & 86.716 \\
% iris & CL & 0 & 84.236 & 0 & 101.943 \\
% iris & ML & 0 & 83.719 & 0 & 83.719 \\
% pr2392 & BB & 0 & 38079232956.253 & 16 & 38245942858.223 \\
% pr2392 & CL & 0 & 25861285950.304 & 3 & 25858583265.707 \\
% pr2392 & ML & 0 & 33362755756.950 & 0 & 33362755756.950 \\
% rds\_cnt & BB & 0 & 79270100.867 & 66 & 81767456.160 \\
% rds\_cnt & CL & 0 & 29382821.089 & 9 & 29627609.847 \\
% rds\_cnt & ML & 0 & 62226600.096 & 0 & 62226600.096 \\
% seeds & BB & 0 & 633.907 & 0 & 705.205 \\
% seeds & CL & 0 & 600.363 & 0 & 616.623 \\
% seeds & ML & 0 & 627.661 & 0 & 627.661 \\
% skin & BB & 0 & 1605650100.793 & 0 & 1726968740.351 \\
% skin & CL & 0 & 930521940.468 & 0 & 1018009550.445 \\
% skin & ML & 0 & 1564552441.830 & 0 & 1564552441.830 \\
% \bottomrule
% \end{tabular}
% \caption{Working-set and feasibility summary for PASS-IG and PASS-CA.}
% \label{tab:subset_summary_ig_ca}
% \end{table*}

\section{Pairwise Constrained Subset Selection Framework}
\label{sec:pairwise_aware}

PASS (\textbf{P}airwise constrained \textbf{A}ssignment via \textbf{S}ubset \textbf{S}election) is a pairwise-constrained $k$-means framework that concentrates computation on ambiguity regions and cannot-link conflicts (Algorithm~\ref{alg:framework}). Subset selection is described in Section~\ref{sec:subset}, and Section~\ref{sec:certified_repair} gives certified reveal and repair guarantees. PASS uses two selectors: \textbf{PASS-CA}, which targets violated cannot-link edges and ambiguity points by margin, and \textbf{PASS-IG}, which uses Fisher--Rao uncertainty. The working-set size is \(m=\min(\alpha n,\;|V|+\beta K\log n)\); working-set statistics under the same budget are reported in Appendix~\ref{app:ig_vs_ca_workingset}.
At each iteration, labels are updated on the working set \(S\) by solving a restricted 0--1 ILP over candidate label sets \(K_i\). Decision variables \(z_{i,g}\) indicate whether point \(i\in S\) takes label \(g\in K_i\). The objective minimizes the sum of squared distances to current centroids over \(S\). Constraints assign one label per \(i\in S\), enforce cannot-link constraints within \(S\), and forbid labels that would violate cannot-link edges to frozen neighbors in \(\hat{\mathcal{S}}\setminus S\). The solver is warm started from the current labels, with a deterministic local search fallback when a mixed integer solver is unavailable. The ILP has \(O(mG)\) binary variables; details are in Appendix~\ref{sec:0_1_ILP}.

\subsection{Computational Experiments}
\label{sec:experiments}

Experiments were run on Ubuntu Linux using Gurobi 11.0.2 and Python 3.10.12. Twelve datasets with sample size \(n\) and dimensionality \(D\) were taken from the UCI Repository~\citep{Dua:2019}. Pairwise constraints follow random pair sampling as in prior work~\citep{piccialli_exact_2022, wagstaff_constrained_2001, guns_repetitive_2016}. For each dataset, three settings are evaluated: ML, CL, and Both. In ML and CL, \(\frac{n}{4}\) constraints are added; in Both, \(\frac{n}{4}\) ML and \(\frac{n}{4}\) CL constraints are added. Datasets are grouped as \emph{small} (\(n\le 1{,}000\)), \emph{medium} (\(n\le 10{,}000\)), \emph{large} (\(n\le 100{,}000\)), and \emph{huge} (\(n\ge 100{,}000\)).

The sum of squared errors (SSE) and the number of violated constraints are reported under ML, CL, and Both, and results are interpreted feasibility first. When instances are infeasible or optimization is time-limited, violation counts are reported alongside SSE. ``No solution found'' indicates that a method did not return a solution within the budget, so SSE is not comparable then. Runtime and analysis are provided in Appendix~\ref{sec:comp_analysis}, including a working-set comparison between PASS-IG and PASS-CA (Appendix~\ref{app:ig_vs_ca_workingset}).
PASS is compared to \textsc{COP} $k$-means~\citep{wagstaff_constrained_2001} (100 restarts), \textsc{BLPKM} CC~\citep{baumann2020binary} (assignment per iteration), and \textsc{PCCC}~\citep{baumann_algorithm_2024}. Best values are highlighted in Table~\ref{tab:classical_results}. Comparisons prioritize feasibility (violations), then SSE among feasible solutions, then runtime. On small and medium datasets, PASS matches baseline SSE, and on large datasets PASS returns solutions within the stated budgets when other methods often do not return results.

\begin{algorithm}[!t]
\caption{PASS $k$-means framework}
\label{alg:framework}
\begin{algorithmic}[1]
\Require data $X$, number of clusters $K$, ML and CL constraints
\State \textbf{Phase 1: Must link contraction}
\State \hspace{1em} Collapse must link components into weighted pseudo points.
\State \textbf{Phase 2: Initialization}
\State \hspace{1em} Run minibatch $k$-means on the contracted data to obtain initial centroids.
\State \textbf{Phase 3: Subset refinement}
\Repeat
  \State Select a working set $S$ using PASS-CA or PASS-IG (Section~\ref{sec:subset}).
  \State Reassign points in $S$ by solving a restricted 0--1 ILP over candidate sets $K_i$, enforcing cannot-link constraints on $S$ and compatibility with frozen neighbors outside $S$.
  \State Update centroids from the new assignments.
\Until{convergence or the iteration budget is reached}
\State \textbf{Phase 4: Certified reveal and repair}
\State \hspace{1em} Repair cannot-link violations and output a checkable witness when available (Section~\ref{sec:certified_repair}).
\State \hspace{1em} Lift assignments from pseudo points back to the original samples.
\State \Return $(b,\mu)$
\end{algorithmic}
\end{algorithm}

\begin{table*}[htbp]
\caption{\textbf{SSE gains via reduction: solving a reduced (subset) problem, q-PCKMeans attains lower SSE than full-problem CP\!-QAOA under CL/ML/Both constraints under the stated simulation protocol} (\(p{=}1\), shots \(=\) 2048).}
\label{tab:sse_compact}
\resizebox{\textwidth}{!}{
\centering
\small
\begin{tabular}{llccc|llccc}
\toprule
\multicolumn{5}{c|}{Real-world datasets} & \multicolumn{5}{c}{Synthetic datasets ($d=2$)} \\
\midrule
Dataset & Method & CL & ML & Both & Dataset & Method & CL & ML & Both \\
\midrule
Iris & CP-QAOA &  642.78 (26)&   678.44 (24)& 673.57 (43)& Circles & CP-QAOA &  186.54 (47)&  187.93 (36)&  187.76 (72)\\
$n, d=(150, 4)$& q-PCKMeans (CA) & \textbf{101.16}& \textbf{106.09}& \textbf{97.62}& $n=300$ & q-PCKMeans (CA) & \textbf{140.59}& \textbf{139.58}& \textbf{165.56}\\
$k=3$& q-PCKMeans (IG) & \textbf{101.16}& \textbf{106.09}& \textbf{97.62}& $k=2$ & q-PCKMeans (IG) & 143.78& 141.62& \textbf{165.56}\\
\midrule
Seeds & CP-QAOA &  2.67E+03 (30)&  2.71E+03 (54)&  2.70E+03 (54)& Moons & CP-QAOA &  300.12 (40)&  299.30 (24)&  296.56 (81)\\
$n, d=(210,7)$& q-PCKMeans (CA) & \textbf{682.73}& \textbf{691.74}& \textbf{724.19}& $n=300$ & q-PCKMeans (CA) & \textbf{196.22}& \textbf{147.70}& \textbf{231.00}\\
$k=3$& q-PCKMeans (IG) & \textbf{682.73}& 733.93& \textbf{724.19}& $k=2$ & q-PCKMeans (IG) & 199.33& 151.30& \textbf{231.00}\\
\midrule
Haberman & CP-QAOA &  5.45E+04 (82)&  5.45E+04 (33)&  5.44E+04 (82)& Spiral & CP-QAOA &  4.10E+03 (36)&  4.09E+03 (34)&  4.08E+03 (75)\\
$n,d=(306,3)$& q-PCKMeans (CA) & \textbf{5.18E+4}& \textbf{5.25E+4}& \textbf{5.32E+4}& $n=300$ & q-PCKMeans (CA) & \textbf{2.98E+3}& \textbf{2.96E+3}& \textbf{3.60E+3}\\
$k=2$& q-PCKMeans (IG) & 5.20E+4& 5.26E+4& \textbf{5.32E+4}& $k=2$ & q-PCKMeans (IG) & 3.04E+3& 3.15E+3& \textbf{3.60E+3}\\
\midrule
Land mine & CP-QAOA &  83.90 (39)&  83.30 (50)&  83.72 (77)& Moons\_2 & CP-QAOA &  798.48 (57)&  792.71 (35)&  796.21 (81)\\
$n,d=(338,3)$& q-PCKMeans (CA) & \textbf{27.80}& \textbf{45.66}& \textbf{44.71}& $n=400$ & q-PCKMeans (CA) & \textbf{517.33}& \textbf{519.54}& \textbf{660.92}\\
$k=5$& q-PCKMeans (IG) & \textbf{27.80}& \textbf{45.66}& \textbf{44.71}& $k=2$ & q-PCKMeans (IG) & 540.14& 531.39& \textbf{660.92}\\
\midrule
Monk\_2 & CP-QAOA &  1.62E+03 (74)&  1.62E+03  (50)&  1.62E+03 (117)& An blobs & CP-QAOA &  994.76 (54)&  999.75 (53)&  997.96 (115)\\
$n,d=(432,6)$& q-PCKMeans (CA) & \textbf{1.54E+3}& \textbf{1.55E+3}& \textbf{1.58E+3}& $n=500$ & q-PCKMeans (CA) & \textbf{153.44}& \textbf{153.44}& \textbf{153.44}\\
$k=2$& q-PCKMeans (IG) & 1.55E+3& \textbf{1.55E+3}& \textbf{1.58E+3}& $k=3$ & q-PCKMeans (IG) & 154.90& 153.94& 153.90\\
\midrule
Raisin & CP-QAOA & 2.86E+12 (126)& 2.86E+12 (80)& 2.86E+12 (66)& Vd blobs & CP-QAOA &  1.20E+03 (70)&  1.20E+03 (55)&  1.19E+03  (160)\\
$n,d=(900,7)$& q-PCKMeans (CA) & \textbf{1.35E+12}& \textbf{1.37E+12}& 1.71E+12& $n=600$ & q-PCKMeans (CA) & \textbf{458.60}& \textbf{691.46}& \textbf{811.74}\\
$k=2$& q-PCKMeans (IG) & \textbf{1.35E+12}& 1.40E+12& \textbf{1.61E+12}& $k=3$ & q-PCKMeans (IG) & \textbf{458.60}& 705.03& \textbf{811.74}\\
\bottomrule
\end{tabular}
}
\begin{flushleft}\footnotesize
Notes: Parentheses indicate the number of violated constraints. Best per setting is highlighted in \textbf{bold}, 1h time limit.
\end{flushleft}
\end{table*}

\section{Extending to Quantum Clustering}
\label{sec:quantum}

Our classical subset selection framework enables scalability, but the optimization over the ambiguous subset $S$ can remain challenging due to cannot-link constraints. This makes the extracted instance a natural target for quantum heuristics. The quantum experiments in this section evaluate a reduction-enabled hybrid pipeline under a simulation protocol and do not claim quantum advantage over classical solvers.
Pairwise constraints map directly to quadratic couplings in QUBO models, which are the native input for many quantum optimizers \citep{cohen2020ising,seong2025hamiltonian}. Existing quantum approaches face a scalability barrier because they require $O(N \times k)$ variables, which is prohibitive for NISQ devices \citep{preskill2018quantum,bharti2022noisy}. Our key insight is that \textbf{our subset selection bypasses this barrier}. By identifying the critical subset $S$, we reduce the quantum variable count to $O(|S| \times k)$, which makes a resource-aware quantum formulation viable within hybrid quantum-classical workflows \citep{harrow_small_2020,tomesh_coreset_2020}.

\subsection{A Quantum Pairwise Constrained Clustering Algorithm}

Most NISQ clustering studies focus on unconstrained $k$-means, leaving explicit pairwise constraints underexplored in scalable settings \citep{cohen2020ising}. We present a quantum refinement algorithm whose complexity depends on $|S|$ rather than $N$. \textbf{This makes constrained quantum refinement practical when a full $(N\times k)$ encoding is infeasible for near-term hardware}.

Problem~\eqref{eqn:qubo} is converted to a QUBO by embedding hard constraints as penalties. Let
\(
\lambda \;=\; \sum_{i\in S}\sum_{g\in\mathcal K}\bigl|\Delta_{i,g}\bigr| \;+\;\varepsilon,\quad \varepsilon>0,
\)
which strictly upper-bounds the variation of the linear term over all assignments. Any infeasible configuration incurs at least \(\lambda\) additional energy (Proposition~\ref{prop:lambda}). We denote by \(d\) the binary assignment variables used for quantum optimization. These correspond to the subset assignment variables in Section~\ref{sec:subset}. The penalties are
\begin{subequations}\label{eqn:penalties}
\begin{align}
H_{\text{one-hot}}
&= \lambda \sum_{i\in S}\!\Bigl(1 - \sum_{g\in\mathcal{K}} d_{i,g}\Bigr)^2,\\
H_{\text{ML}}
&= \lambda \sum_{(u,v)\in\mathcal{T}_{ml}\cap(S\times S)}\sum_{g\in\mathcal{K}} (d_{u,g}-d_{v,g})^2,\\
H_{\text{CL}}
&= \lambda \sum_{(u,v)\in\mathcal{T}_{cl}\cap(S\times S)}\sum_{g\in\mathcal{K}} d_{u,g}\,d_{v,g}.
\end{align}
\end{subequations}
The full Hamiltonian is
\begin{equation}
\label{eqn:hamiltonian}
\begin{aligned}
H(d)
&= \sum_{i\in S}\sum_{g\in\mathcal{K}} \Delta_{i,g}\, d_{i,g}
   + H_{\text{one-hot}} + H_{\text{ML}} + H_{\text{CL}} + H_{\text{cross}} \\
&= d^{\top} Q\, d + c^{\top} d \; (+\text{const}).
\end{aligned}
\end{equation}
where \(H_{\text{cross}}\) groups the linear cross-edge terms for pairs with exactly one endpoint in \(S\) (details in Appendix~\ref{sec:qvf}). A classical re-centering step is followed by quantum refinement on the selected subset $S$. When the restricted instance admits a feasible assignment, the refinement step preserves feasibility for the constraints encoded in the refinement step. Otherwise, residual violations are reported under the evaluation protocol described earlier (see Remark~2.1). This formulation constitutes the quantum pairwise constrained clustering algorithm (q-PCKMeans; Appendix~\ref{sec:qvf}).

\subsection{Computational Experiments} \label{sec:quantum_experiments}

The evaluation compares q\!-PCKMeans with CP\!-QAOA~\citep{hadfield2019qaoa}, a full encoding of constrained clustering that requires $\mathcal{O}(n k)$ binary variables. This comparison isolates the impact of subset selection: full encodings become impractical on near-term devices due to limited qubits and coherence times.
Implementation uses Qiskit~1.3.2 and Python~3.10.12, following the protocol in Section~\ref{sec:pairwise_aware}. Each circuit uses a single QAOA layer ($p{=}1$) with 2{,}048 shots, following the few-shot guide of \citet{hao2024end}. All runs use Qiskit’s \texttt{AerSimulator} with IBM-Q noise emulation. We evaluate q\!-PCKMeans on 12 datasets, including 6 UCI datasets~\citep{Dua:2019} and 6 synthetic datasets. To keep costs tractable while moving beyond prior constrained quantum clustering evaluations, datasets are capped at $n \le 900$ and $d \le 7$.

Following Section~\ref{sec:pairwise_aware}, quality is measured by SSE. When instances are infeasible or optimization is time-limited, the number of violated constraints is reported, consistent with the evaluation protocol. Across benchmarks and constraint regimes, q-PCKMeans improves SSE relative to CP-QAOA (reductions of 6--84\%) while remaining competitive with classical heuristics. These gains arise from reducing the quantum variable count from $\mathcal{O}(nk)$ to $\mathcal{O}(|S|k)$ and from a pairwise-restricted subspace mixer that biases the search toward feasible assignments within the mixer constraints. This reduces reliance on penalties for infeasibility. Under the noise emulation protocol in Appendix~\ref{sec:eval}, this supports execution up to $n=900$ with low circuit depth and low violation counts in the reduced space.

\section{Discussion}
\label{sec:discussion}

\noindent\textbf{The ambiguous kernel enables scalable refinement with checkable repair outcomes.}
PASS concentrates computation on points likely to change, including endpoints of cannot-link violations and points near decision boundaries. Updates are restricted to a working set $S$, which is also used as the reveal set for repair, reducing the decision space from $O(Nk)$ to $O(|S|k)$ while maintaining centroid updates through re-centering. Cannot-link feasibility is handled through repair on the induced constraint subgraph. With labels outside $S$ frozen, repair reduces to list coloring with lists induced by frozen neighbors. Successful repair returns a repaired assignment with a witness trace that can be checked independently. If the frozen outside assignment is infeasible, or if repair fails under the selected reveal set, remaining violations or obstruction evidence are reported under the same evaluation protocol. The same reduction yields compact QUBO instances that support hybrid quantum refinement on the extracted combinatorial core.

\noindent\textbf{Quantum feasibility under reduction.}
The quantum implementation prioritizes feasibility in the reduced space over SSE improvements, reflecting NISQ constraints. This supports shallow circuits and limited shots under the simulation protocol, while the gap to classical methods reflects hardware limits and restricted circuit depth. Subset restriction reduces the number of binary variables from $O(Nk)$ to $O(|S|k)$; when $|S|$ remains compact, the resulting QUBOs stay small as $N$ grows. This suggests that improvements in qubit count, fidelity, and feasible-subspace circuit design may expand the range of constrained instances where hybrid refinement is practical, without claiming quantum advantage over classical solvers.

\section{Conclusion}
This paper presented PASS for pairwise constrained \emph{k}-means. PASS contracts must-link components into weighted pseudo-points, selects a working set $S$ from cannot-link conflicts and ambiguous points, and optimizes assignments only on $S$ with labels outside $S$ fixed. A repair step casts cannot-link enforcement as list coloring and returns a certificate when it succeeds; otherwise, residual violations or obstruction evidence are reported. Experiments show reduced runtime with comparable SSE, and the same reduction yields compact QUBOs that support q-PCKMeans, improving SSE relative to a full-encoding CP-QAOA baseline under the stated simulation protocol.
% \input{content/4_Conclusion}
% \input{content/3_Related}

% % ------------------------------------------------------
% % % ------------------------------------------------------
% \section{Conclusion and Broader Impact}

% We studied how to design directional uncertainty sets from a large dictionary of candidate
% directions. With a budget constraint, we select a subset that defines an atomic uncertainty
% family with a closed form support function, enabling tractable robust objectives when the model
% is affine in the uncertainty. We also provide a certificate linking directional coverage to the
% robust value gap, and a finite sample calibration rule for the radius that yields an out of
% sample feasibility guarantee.

% \paragraph{Broader impact.}
% This framework can support reliability oriented decision making in settings where structured
% perturbations matter, such as planning and risk sensitive prediction. It can also increase
% conservatism in ways that reduce performance for some groups if the direction dictionary reflects
% biased data or incomplete domain knowledge. The dictionary, budget, and calibration target encode
% which shifts are treated as relevant, so these choices should be documented. We recommend
% reporting the selected directions and sensitivity to the budget and radius, and evaluating
% coverage on held out data.

\bibliography{custom}

\clearpage

\onecolumn
\title{PASS: Certified Subset Repair for Classical and Quantum \\ Pairwise Constrained Clustering\\(Supplementary Material)}
\maketitle
\appendix
\section{Additional Details for Subset Selection}
\label{app:threshold}

This appendix provides implementation details and supporting material for
Section~\ref{sec:subset}.

\subsection{Margin threshold rule}
\label{app:threshold:margin}

Recall the signed margin
\[
m(i) \;=\; \min_{g \neq g^*(i)}\Big(\|x_i-\mu_{g^*(i)}\|^2 - \|x_i-\mu_g\|^2\Big).
\]
A threshold \(\tau \ge 0\) is selected using a percentile rule. Let
\(\theta_p\) be the \(p\)-th percentile of \(\{m(i)\}_{i\in\hat{\mathcal{S}}}\)
with \(p \in [10,30]\). Set
\[
\tau \;=\; \max\{0,\,-\theta_p\},
\qquad
A \;=\; \{\, i \in \hat{\mathcal{S}} : m(i) > -\tau \,\}.
\]
This keeps misassigned points (\(m(i) > 0\)) and points near a tie
(\(-\tau < m(i) \le 0\)). The rule controls \(|A|\) without requiring an
instance dependent absolute scale.

\subsection{Working subset size}

The working subset is \(S = V \cup A\), where
\[
V \;=\; \left\{\, i \in \hat{\mathcal{S}} \;:\; \exists j \in \hat{\mathcal{S}} \ \text{such that}\ \{i,j\}\in\mathcal{T}_{cl}
\ \text{and}\ g^*(i)=g^*(j)\,\right\}.
\]
The condition \(V \subseteq S\) is required for feasibility preservation in
Section~\ref{ssec:basic_subset}.

\subsection{Selection Algorithms}
\label{app:algorithms}

This section lists reference procedures for constructing the working set used in Section~\ref{sec:subset}. Algorithm~\ref{alg:constraint_subset_app} implements the margin-based rule \(S=V\cup A\), where \(V\) collects endpoints of violated cannot-link constraints under the baseline assignment \(g^*\) and \(A\) collects near-boundary points via the percentile threshold \(\tau\). Algorithm~\ref{alg:igass_app} implements the budgeted information-geometric rule \(S=V\cup\mathrm{Top}_{m-|V|}(J_i)\), where \(J_i\) is the Fisher--Rao ambiguity score and \(m \ge |V|\). Unless stated otherwise, \(\mathcal{T}_{cl}\) is treated as an undirected edge set.

\begin{algorithm}[t]
\caption{Constraint-Aware Subset Selection}
\label{alg:constraint_subset_app}
\begin{algorithmic}[1]
\Require Dataset \(X\), centers \(\{\mu_g\}_{g\in\mathcal{K}}\), cannot-link constraints \(\mathcal{T}_{cl}\), percentile \(p \in [10,30]\)
\Ensure Subset \(S \subseteq \hat{\mathcal{S}}\)
\State Compute baseline assignment \(g^*(i)\) for all \(i \in \hat{\mathcal{S}}\)
\State Initialize \(V \gets \emptyset\)
\For{each point \(i \in \hat{\mathcal{S}}\)}
    \State \(d_{\mathrm{cur}} \gets \|x_i - \mu_{g^*(i)}\|^2\)
    \State \(d_{\mathrm{alt}} \gets \min_{g \neq g^*(i)} \|x_i - \mu_g\|^2\)
    \State \(m(i) \gets d_{\mathrm{cur}} - d_{\mathrm{alt}}\)
\EndFor
\For{each constraint \(\{i,j\} \in \mathcal{T}_{cl}\)}
    \If{\(g^*(i) = g^*(j)\)}
        \State \(V \gets V \cup \{i, j\}\)
    \EndIf
\EndFor
\State Let \(\theta_p\) be the \(p\)-th percentile of \(\{m(i)\}_{i\in\hat{\mathcal{S}}}\)
\State \(\tau \gets \max\{0,\,-\theta_p\}\)
\State \(A \gets \{ i \in \hat{\mathcal{S}} : m(i) > -\tau \}\)
\State \Return \(S \gets V \cup A\)
\end{algorithmic}
\end{algorithm}

\begin{algorithm}[t]
\caption{Information Geometric Subset Selection}
\label{alg:igass_app}
\begin{algorithmic}[1]
\Require Soft assignments \(\{p_i\}_{i\in\hat{\mathcal{S}}}\), violation set \(V\), budget \(m \ge |V|\)
\Ensure Subset \(S\) with \(|S| = m\)
\State \(S \gets V\)
\For{each \(i \in \hat{\mathcal{S}}\setminus V\)}
    \State Let \(g_1, g_2\) be the indices of the two largest entries of \(p_i\) (ties broken deterministically)
    \State \(q_i \gets \Big(\frac{p_i^{(g_1)}}{p_i^{(g_1)}+p_i^{(g_2)}},\ \frac{p_i^{(g_2)}}{p_i^{(g_1)}+p_i^{(g_2)}}\Big)\)
    \State \(d_{\mathrm{FR}}(q_i,b) \gets 2\arccos\!\Big(\frac{1}{\sqrt{2}}(\sqrt{q_{i1}}+\sqrt{q_{i2}})\Big)\), where \(b=(\tfrac12,\tfrac12)\)
    \State \(J_i \gets 1 - \frac{2}{\pi}d_{\mathrm{FR}}(q_i,b)\)
\EndFor
\State Add to \(S\) the \(m-|V|\) indices in \(\hat{\mathcal{S}}\setminus V\) with largest \(J_i\)
\State \Return \(S\)
\end{algorithmic}
\end{algorithm}

% \newpage
\subsection{Information geometric details}
\label{app:ig_details}

This subsection states the PASS-IG budget rule, its computational cost, and the greedy selection property for additive scores.

\subsubsection{Budget rule}

PASS-IG selects a working set of size \(m\) with \(m \ge |V|\), ensuring that all cannot-link violations under the baseline assignment are included. A practical rule is
\[
m \;=\; \min\Big(\alpha|\hat{\mathcal{S}}|,\; |V| + \beta K \log|\hat{\mathcal{S}}|\Big),
\]
with \(\alpha \in [0.1,0.3]\) and \(\beta \in [2,5]\). The cap \(\alpha|\hat{\mathcal{S}}|\) bounds worst-case cost, while the second term includes \(V\) and adds a logarithmic number of high-score points per cluster.

\subsubsection{Complexity}

Computing \(J_i\) requires the two largest entries of \(p_i\), found in \(O(K)\) time per point. Ranking scores over \(\hat{\mathcal{S}}\setminus V\) costs \(O(|\hat{\mathcal{S}}|\log|\hat{\mathcal{S}}|)\). The total complexity is \(O(|\hat{\mathcal{S}}|(K+\log|\hat{\mathcal{S}}|))\).

\subsubsection{Greedy selection for additive scores}

\begin{proposition}[Greedy selection for additive scores]
Let \(F(S)=\sum_{i\in S}J_i\). Among all sets \(S\) satisfying \(|S|=m\) and \(V\subseteq S\),
a maximizer is obtained by taking \(S=V\) and adding the \(m-|V|\) indices in \(\hat{\mathcal{S}}\setminus V\)
with largest scores.
\end{proposition}

\section{Certified Subset Repair for Cannot-Link Constraints}
\label{app:certified_repair}

This appendix formalizes the certified repair view used in Section~\ref{sec:certified_repair}. The setting is a graph of cannot-link constraints and a current assignment. A reveal set \(S\) specifies which vertices may change. A certified repair primitive with verifiable outcomes is defined, an equivalent list coloring formulation is given, a sufficient local slack condition is stated with a polynomial time witness, and a posteriori exactness certificates are provided when lower bounds are tight.

\subsection{Certified repair as a proof producing primitive}

Let \(V\) be the vertex set and \([K]=\{1,\dots,K\}\) be the label set. An assignment is a map
\(a:V\to[K]\). Cannot-link constraints are edges of an undirected graph \(G=(V,E)\).
An edge \(\{i,j\}\in E\) is violated by \(a\) if \(a(i)=a(j)\).

\begin{definition}[Verifier: batch of violated edges]
Given \(a\), define the violated edge set
\[
E_{\mathrm{viol}}(a) \;=\; \{\,\{i,j\}\in E : a(i)=a(j)\,\}.
\]
The verifier returns \(\mathrm{ACCEPT}\) if \(E_{\mathrm{viol}}(a)=\emptyset\) and
\(\mathrm{FAIL}\) otherwise.
\end{definition}

Each vertex \(i\in V\) has a nonnegative reveal cost \(w(i)\ge 0\). A reveal set is \(S\subseteq V\),
and vertices in \(V\setminus S\) are frozen.

\begin{definition}[Movable endpoints repair semantics]
Given \(a\) and \(S\), a repaired assignment \(a'\) may change labels only on \(S\) and must satisfy
\[
a'(j)=a(j)\ \ \forall j\notin S,
\qquad
a'(i)\neq a'(j)\ \ \forall \{i,j\}\in E.
\]
Equivalently, \(a'\) is a proper \(K\)-coloring of \(G\) consistent with frozen labels.
\end{definition}

\begin{definition}[Frozen feasibility and batch cover]
Fix \(a\) and \(S\). The pair \((a,S)\) is frozen feasible if
\[
a(u)\neq a(v)\quad \forall\{u,v\}\in E \ \text{with}\ u,v\notin S.
\]
Equivalently, \(S\) hits every violated edge: for all \(\{i,j\}\in E_{\mathrm{viol}}(a)\),
\(i\in S\) or \(j\in S\).
\end{definition}

\begin{definition}[Minimum repairable reveal cost]
Define
\[
\mathrm{OPT}_{\mathrm{rep}}(a)
\;=\;
\min_{S\subseteq V}\ \sum_{i\in S} w(i)
\quad
\text{subject to}\quad
(a,S)\ \text{is frozen feasible and admits a repair.}
\]
\end{definition}

\begin{definition}[Certified repair primitive]
Fix \(a\), costs \(w(\cdot)\), and an optional budget \(B\).
A certified repair primitive returns one of the following outcomes together with a witness that is
checkable in polynomial time:
\begin{enumerate}
\item \textbf{ACCEPT:} a proof that \(E_{\mathrm{viol}}(a)=\emptyset\).
\item \textbf{REPAIR:} a repaired assignment \(a'\) satisfying the repair semantics, together with
      a repair witness.
\item \textbf{EXACT:} a repaired assignment \(a'\) together with a lower bound certificate proving
      \(\sum_{i\in S} w(i) = \mathrm{OPT}_{\mathrm{rep}}(a)\) for the stated reveal set \(S\).
\item \textbf{OBSTRUCT:} a subinstance \(H\subseteq G\) with an infeasibility certificate showing
      that no repair exists within a stated budget on that core under inherited frozen labels.
\end{enumerate}
\end{definition}

\subsection{List coloring reformulation}

Given frozen vertices \(V\setminus S\) and their labels \(a(\cdot)\), each vertex \(i\in S\) has
allowed labels that avoid frozen neighbor labels:
\[
L_i(a,S) \;=\; [K]\setminus \{\, a(j) : \{i,j\}\in E,\ j\notin S \,\}.
\]

\begin{definition}[Induced lists]
Given \((a,S)\), define the list system \(L(a,S)=\{L_i(a,S)\}_{i\in S}\) on the induced subgraph
\(G[S]\).
\end{definition}

\begin{lemma}[Repairability equals list colorability under frozen feasibility]
If \((a,S)\) is frozen feasible, then \(S\) is repairable for \(a\) if and only if \(G[S]\) admits
a proper list coloring \(\phi:S\to[K]\) such that \(\phi(i)\in L_i(a,S)\) for all \(i\in S\).
\end{lemma}

\begin{proof}
If \(a'\) is a feasible repair, then \(\phi=a'|_S\) is proper on \(G[S]\) and avoids frozen neighbor
labels, so \(\phi(i)\in L_i(a,S)\).
Conversely, if \(\phi\) is a proper list coloring of \(G[S]\) and \((a,S)\) is frozen feasible,
define \(a'(i)=\phi(i)\) for \(i\in S\) and \(a'(j)=a(j)\) for \(j\notin S\). Edges inside \(S\) are
proper by \(\phi\), edges between \(S\) and \(V\setminus S\) are proper by list feasibility, and
edges inside \(V\setminus S\) are proper by frozen feasibility.
\end{proof}

\subsection{Local slack certificates}

A sufficient condition for list colorability is based on degeneracy.

\begin{definition}[Degeneracy and boundary degree]
Let \(\deg(H)\) denote the degeneracy of a graph \(H\). For a reveal set \(S\), define the boundary degree
\[
\Delta_{\mathrm{out}}(S) \;=\; \max_{i\in S} |\{\, j\notin S : \{i,j\}\in E \,\}|.
\]
\end{definition}

\begin{lemma}[Local list size bound]
For any \(i\in S\), \(|L_i(a,S)| \ge K - \Delta_{\mathrm{out}}(S)\).
\end{lemma}

\begin{proof}
Each frozen neighbor forbids at most one label. There are at most
\(\Delta_{\mathrm{out}}(S)\) frozen neighbors.
\end{proof}

\begin{lemma}[Greedy list coloring under degeneracy]
Let \(H\) be \(d\)-degenerate. If every vertex \(v\) has a list \(L(v)\) with \(|L(v)|\ge d+1\), then
\(H\) is list colorable from these lists, and a list coloring can be constructed greedily along a
degeneracy ordering.
\end{lemma}

\begin{assumption}[Instance wise local slack]
\label{assump:local_slack}
Fix \((a,S)\). The set \(S\) satisfies local slack for \(a\) if
\[
\min_{i\in S} |L_i(a,S)| \;\ge\; \deg(G[S]) + 1.
\]
\end{assumption}

\begin{theorem}[Local slack and frozen feasibility imply certified repairability]
If \((a,S)\) is frozen feasible and \(S\) satisfies Assumption~\ref{assump:local_slack}, then \(S\)
is repairable for \(a\). Moreover, a repair can be constructed in polynomial time by greedy list
coloring on \(G[S]\).
\end{theorem}

\begin{proof}
Assumption~\ref{assump:local_slack} implies \(|L_i(a,S)|\ge \deg(G[S])+1\) for all \(i\in S\).
Apply the greedy list coloring lemma to obtain a list coloring \(\phi\) of \(G[S]\). The list coloring
lemma then yields a global repair.
\end{proof}

\paragraph{Witness format.}
A checkable witness consists of: the computed lists \(L_i(a,S)\), a degeneracy peeling trace for
\(G[S]\), and the greedy coloring trace. An external checker can replay the peeling and verify that
each colored vertex chose an available label distinct from its already colored neighbors.

\subsection{Universal lower bounds and a posteriori exactness}

Let \(G_{\mathrm{viol}}=(V,E_{\mathrm{viol}}(a))\) be the violation batch graph.

\begin{definition}[Minimum vertex cover of the violation batch]
Define
\[
\mathrm{OPT}_{\mathrm{VC}}(a)
\;=\;
\min_{C\subseteq V}\ \sum_{i\in C} w(i)
\quad \text{subject to}\quad
\forall\{i,j\}\in E_{\mathrm{viol}}(a),\ i\in C\ \text{or}\ j\in C.
\]
\end{definition}

\begin{lemma}[Universal lower bound]
For any \(a\), \(\mathrm{OPT}_{\mathrm{rep}}(a)\ge \mathrm{OPT}_{\mathrm{VC}}(a)\).
\end{lemma}

\begin{proof}
Any feasible reveal set must hit every violated edge, hence is a vertex cover of \(G_{\mathrm{viol}}\).
\end{proof}

\begin{definition}[VC tight exactness certificate]
A VC tight certificate for \(a\) is a triple \((C,\mathrm{Rep}(C),\pi)\) where:
\begin{enumerate}
\item \(C\subseteq V\) is a vertex cover of \(G_{\mathrm{viol}}\);
\item \(\mathrm{Rep}(C)\) is a checkable repair witness for \(C\), for example an explicit list coloring
      of \(G[C]\) under \(L(a,C)\), or a local slack proof with a greedy trace;
\item \(\pi\) is a checkable proof that \(C\) is an optimal weighted vertex cover for \(G_{\mathrm{viol}}\).
\end{enumerate}
\end{definition}

\begin{theorem}[A posteriori exactness in the VC tight regime]
If a VC tight exactness certificate \((C,\mathrm{Rep}(C),\pi)\) exists for \(a\), then
\[
\mathrm{OPT}_{\mathrm{rep}}(a) \;=\; \mathrm{OPT}_{\mathrm{VC}}(a) \;=\; \sum_{i\in C} w(i),
\]
and a globally feasible repair is constructible from \(\mathrm{Rep}(C)\).
\end{theorem}

\begin{proof}
The universal lower bound gives \(\mathrm{OPT}_{\mathrm{rep}}(a)\ge \mathrm{OPT}_{\mathrm{VC}}(a)\).
The repair witness \(\mathrm{Rep}(C)\) yields a repair on \(C\), so
\(\mathrm{OPT}_{\mathrm{rep}}(a)\le \sum_{i\in C} w(i)\). The certificate \(\pi\) proves
\(\sum_{i\in C} w(i)=\mathrm{OPT}_{\mathrm{VC}}(a)\), hence equality holds.
\end{proof}

\subsection{Repair ILP and LP tight certificates}

A compact ILP is given whose optimal value equals \(\mathrm{OPT}_{\mathrm{rep}}(a)\). Introduce variables
\(s_i\in\{0,1\}\) indicating whether \(i\) is revealed and variables \(y_{i,c}\in\{0,1\}\) indicating
the repaired label.

\begin{definition}[Repair ILP]
\label{def:repair_ilp}
Fix \(a\) on \(V\) and graph \(G=(V,E)\). Consider
\begin{subequations}
\label{eq:repair_ilp}
\begin{align}
\min \quad & \sum_{i\in V} w(i)s_i \\
\text{s.t.}\quad
& \sum_{c=1}^K y_{i,c} = 1 \qquad && \forall i\in V, \label{eq:repair_ilp:one}\\
& y_{i,c} \le s_i \qquad && \forall i\in V,\ \forall c\neq a(i), \label{eq:repair_ilp:freeze1}\\
& y_{i,a(i)} \ge 1 - s_i \qquad && \forall i\in V, \label{eq:repair_ilp:freeze2}\\
& y_{i,c} + y_{j,c} \le 1 \qquad && \forall\{i,j\}\in E,\ \forall c\in [K], \label{eq:repair_ilp:cl}\\
& s_i \in \{0,1\},\ y_{i,c}\in\{0,1\} \qquad && \forall i\in V,\ \forall c\in[K]. \label{eq:repair_ilp:int}
\end{align}
\end{subequations}
Let \(\mathrm{OPT}_{\mathrm{ILP}}(a)\) be its optimal value.
\end{definition}

\begin{lemma}[Equivalence of repair cost and ILP cost]
For every \(a\), \(\mathrm{OPT}_{\mathrm{rep}}(a)=\mathrm{OPT}_{\mathrm{ILP}}(a)\).
\end{lemma}

\begin{proof}
Given a repair with reveal set \(S\) and repaired assignment \(a'\), set \(s_i=\mathbb{I}[i\in S]\)
and \(y_{i,c}=\mathbb{I}[a'(i)=c]\). The constraints enforce one label per vertex, freezing outside
\(S\), and proper coloring on every edge. The objective equals \(\sum_{i\in S}w(i)\).
Conversely, from any feasible \((s,y)\), define \(S=\{i:s_i=1\}\) and \(a'(i)=c\) where \(y_{i,c}=1\).
Constraints imply \(a'(i)=a(i)\) for \(i\notin S\) and proper coloring on \(E\). The objective is
\(\sum_{i\in S}w(i)\). Optimizing both directions yields equality.
\end{proof}

\begin{definition}[LP relaxation]
Let \(\mathrm{OPT}_{\mathrm{LP}}(a)\) be the optimal value of the relaxation of
\eqref{eq:repair_ilp} obtained by replacing \eqref{eq:repair_ilp:int} with
\(0\le s_i \le 1\) and \(0\le y_{i,c}\le 1\).
\end{definition}

\begin{lemma}[Hierarchy of bounds]
For any \(a\),
\[
\mathrm{OPT}_{\mathrm{VC}}(a) \;\le\; \mathrm{OPT}_{\mathrm{LP}}(a) \;\le\;
\mathrm{OPT}_{\mathrm{ILP}}(a) \;=\; \mathrm{OPT}_{\mathrm{rep}}(a).
\]
\end{lemma}

\begin{proof}
\(\mathrm{OPT}_{\mathrm{LP}}(a)\le \mathrm{OPT}_{\mathrm{ILP}}(a)\) because the LP is a relaxation.
For \(\mathrm{OPT}_{\mathrm{VC}}(a)\le \mathrm{OPT}_{\mathrm{LP}}(a)\), let \(\{i,j\}\in E_{\mathrm{viol}}(a)\) and
\(c_0=a(i)=a(j)\). From \eqref{eq:repair_ilp:freeze2}, \(y_{i,c_0}\ge 1-s_i\) and \(y_{j,c_0}\ge 1-s_j\).
From \eqref{eq:repair_ilp:cl}, \(y_{i,c_0}+y_{j,c_0}\le 1\). Hence \(s_i+s_j\ge 1\), which are the
fractional vertex cover constraints for the violated batch.
\end{proof}

\begin{definition}[LP dual certificate]
An LP dual certificate is a feasible dual solution whose objective value equals a claimed bound
\(L\). The certificate is checkable by verifying dual feasibility and recomputing the dual objective.
\end{definition}

\begin{theorem}[LP tight a posteriori exactness]
Fix \(a\). Suppose:
\begin{enumerate}
\item a feasible repair of cost \(U\), and
\item an LP dual certificate proving \(\mathrm{OPT}_{\mathrm{LP}}(a)=L\).
\end{enumerate}
If \(U=L\), then \(U=\mathrm{OPT}_{\mathrm{rep}}(a)\) and this optimality is certified.
\end{theorem}

\begin{proof}
By the hierarchy of bounds, \(\mathrm{OPT}_{\mathrm{LP}}(a)\le \mathrm{OPT}_{\mathrm{rep}}(a)\le U\).
The dual certificate proves \(\mathrm{OPT}_{\mathrm{LP}}(a)=L\), hence \(L\le \mathrm{OPT}_{\mathrm{rep}}(a)\le U\).
If \(U=L\), then all inequalities are equalities.
\end{proof}

\subsection{Certified decomposition when local slack fails}

Local slack is sufficient for repairability. When it fails, a checkable slack-failure core is extracted.

\begin{definition}[Peeling extractor and slack-failure core]
Fix \((a,S)\) and compute lists \(L(a,S)\). Let
\[
\ell \;=\; \min_{i\in S} |L_i(a,S)|.
\]
Define a peeling process on \(H_0=G[S]\): repeatedly delete any vertex of the current graph of
degree at most \(\ell-1\), producing \(H_{t+1}\), until no such vertex remains. Let \(H_{\ell}(a,S)\)
be the final remaining graph, possibly empty.
\end{definition}

\begin{lemma}[Certified slack or core dichotomy]
For any \((a,S)\), exactly one of the following holds:
\begin{enumerate}
\item \(H_{\ell}(a,S)=\emptyset\). Then \(\deg(G[S])\le \ell-1\), so local slack holds and \(S\) is repairable.
\item \(H_{\ell}(a,S)\neq\emptyset\). Then \(H_{\ell}(a,S)\) has minimum degree at least \(\ell\), hence
\(\deg(G[S])\ge \ell\) and local slack fails. The condition \(\delta(H_{\ell})\ge \ell\) is checkable by
direct inspection.
\end{enumerate}
\end{lemma}

\begin{proof}
A graph is \((\ell-1)\)-degenerate if and only if iterative deletion of vertices of degree at most
\(\ell-1\) deletes all vertices. The two cases follow from the peeling process definition.
\end{proof}

\subsection{Obstruction cores and certified failure on structured subgraphs}

\begin{definition}[Budgeted repair decision]
Given \(a\) and budget \(B\ge 0\), decide whether there exists a reveal set \(S\) and repair \(a'\)
with \(\sum_{i\in S}w(i)\le B\) satisfying the repair semantics.
\end{definition}

\begin{definition}[Obstruction core]
An obstruction core for \((a,B)\) is a subgraph \(H\subseteq G\) with vertex set \(U\), together with
a checkable witness showing that no repair exists on \(U\) with reveal cost at most \(B_U\) for some
explicit \(B_U\le B\), under the frozen colors on \(V\setminus U\). A valid witness can be produced by
a rerunnable exact backend on the restricted instance, for example dynamic programming tables on a
tree decomposition.
\end{definition}

\begin{definition}[Treewidth parameter]
Let \(\mathrm{tw}(H)\) denote the treewidth of \(H\).
\end{definition}

\begin{theorem}[Exact dynamic programming on bounded treewidth cores]
Given a tree decomposition of a core \(H\) of width \(t=\mathrm{tw}(H)\), the minimum repairable
reveal problem restricted to \(H\) can be solved exactly in time
\[
O\!\left(|V(H)|\cdot (K+2)^{t+1}\right),
\]
up to polynomial factors in \(t\). The algorithm outputs either an optimal restricted repair or a
checkable infeasibility witness for a stated budget on \(H\), which constitutes an obstruction core
certificate.
\end{theorem}

\subsection{Empirical certificate coverage}
\label{app:empirical_certificate_coverage}

This section reports how often the local slack certificate in Theorem~\ref{thm:local_slack_main} applies when the repair routine is invoked. Since the theorem gives a sufficient condition, certification and repair success may differ. Table~\ref{tab:certificate_coverage_app} summarizes how often repair is triggered, the fraction of certificate required calls that are certified, and repair success rates with runtime overhead. Under this configuration, PASS-IG reaches feasibility without invoking repair, so certificate coverage is not applicable for PASS-IG.

\begin{table}[h]
\centering
\caption{Coverage of the local slack repair certificate. Cert rate is computed over certificate required repair calls. Repair ok is computed over all repair calls. Median ms per call is computed per run and aggregated over runs with at least one repair call. Under this configuration, PASS-IG does not invoke repair, so coverage is not applicable.}
\label{tab:certificate_coverage_app}
\begin{tabular}{lrrrrrrrr}
\toprule
Method & Runs & Runs w/ repair & Calls & Cert req & Certified & Cert rate & Repair ok & Median ms per call \\
\midrule
PASS-CA & 21 & 13 & 23 & 13 & 3 & 23.1\% & 26.1\% & 38.5 \\
PASS-IG & 21 & 0  & 0  & 0  & 0 & n/a & n/a & n/a \\
\bottomrule
\end{tabular}
\end{table}
% \newpage
\section{0--1 ILP for Phase 3}
\label{sec:0_1_ILP}

Let $S \subseteq \hat{\mathcal{S}}$ be the working subset and let $\{\mu_g\}_{g=1}^K$ be the current centroids.
For each $i \in S$, let $K_i \subseteq \{1,\dots,K\}$ be the candidate label set and introduce binary variables
$z_{i,g}\in\{0,1\}$ for $g\in K_i$, indicating whether $i$ is assigned to cluster $g$.

Let $g_i^{\mathrm{cur}}$ denote the current label and let $c_i^{\mathrm{cur}}=\lVert x_i-\mu_{g_i^{\mathrm{cur}}}\rVert^2$.
Define the incremental cost
\[
\Delta_{i,g}\;=\;\lVert x_i-\mu_g\rVert^2\;-\;c_i^{\mathrm{cur}},\qquad g\in K_i.
\]
If $g_i^{\mathrm{cur}}\in K_i$, then $\Delta_{i,g_i^{\mathrm{cur}}}=0$. Since $\sum_{g\in K_i} z_{i,g}=1$ (below),
the term $\sum_{i\in S}c_i^{\mathrm{cur}}$ is constant and Phase~3 minimizes a restricted SSE surrogate via
\[
\min_{z}\;\sum_{i\in S}\sum_{g\in K_i}\Delta_{i,g}\,z_{i,g}.
\]

\subsection{Feasibility Constraints}

\paragraph{One-hot assignment.}
Each $i\in S$ selects exactly one label:
\[
\sum_{g\in K_i} z_{i,g}=1\,,\qquad \forall i\in S.
\]

\paragraph{Cannot-link within $S$.}
For any $(u,v)\in\mathcal T_{\mathrm{cl}}$ with $u,v\in S$,
\[
z_{u,g}+z_{v,g}\le 1\,,\qquad \forall g\in K_u\cap K_v.
\]

\paragraph{Cross-$S$ consistency (frozen neighbors).}
For $(u,v)\in\mathcal T_{\mathrm{cl}}$ with $u\in S$ and $v\notin S$, the vertex $v$ is frozen to its current label
$g_v^{\mathrm{cur}}$. Therefore $u$ cannot take label $g_v^{\mathrm{cur}}$, enforced by
\[
z_{u,g_v^{\mathrm{cur}}}=0 \quad \text{whenever } g_v^{\mathrm{cur}}\in K_u.
\]
In implementation, $K_u$ is constructed to include current labels of CL neighbors (including neighbors outside $S$), so forbidden frozen-neighbor labels are present as variables and can be clamped to zero; equivalently, these labels may be removed from $K_u$ before creating variables.
If after forbidding frozen-neighbor labels $K_u=\emptyset$ for some $u$, then the restricted subproblem is infeasible; in that case a deterministic fallback described in the main text is used to produce an admissible update.

\subsection{Implementation Details}

\noindent\textbf{Candidate sets.}
For $i\in S$, initialize $K_i$ with the nearest $G$ centroids to $x_i$ (default $G{=}4$). If $i$ is a CL violator under the current assignment
or adjacent (in the CL graph) to a violator, set $K_i\leftarrow\{1,\dots,K\}$.
Otherwise, augment $K_i$ with the current label of $i$ and the current labels of its CL neighbors (including neighbors outside $S$).

\noindent\textbf{Warm start and time limit.}
If $g_i^{\mathrm{cur}}\in K_i$, set $z_{i,g_i^{\mathrm{cur}}}\leftarrow 1$ as a MIP start and solve under a wall-clock time limit.
If a MILP solver is unavailable, use a deterministic feasible local-search fallback.

\begin{algorithm}[H]
\caption{Phase 3: Restricted 0--1 ILP on $S$}
\begin{algorithmic}[1]
\Require Data $X$, centers $\{\mu_g\}$, current labels $\{g_i^{\mathrm{cur}}\}$, subset $S$, CL set $\mathcal T_{\mathrm{cl}}$, parameter $G$
\State Build CL adjacency; mark violators $\mathcal V=\{i:\exists (i,j)\in\mathcal T_{\mathrm{cl}},\ g_i^{\mathrm{cur}}=g_j^{\mathrm{cur}}\}$
\For{$i\in S$}
  \State $K_i \gets$ top-$G$ nearest centroids to $x_i$
  \If{$i\in\mathcal V$ \textbf{ or } $i$ is adjacent to $\mathcal V$}
    \State $K_i \gets \{1,\dots,K\}$
  \EndIf
  \State $K_i \gets K_i \cup \{g_i^{\mathrm{cur}}\} \cup \{g_j^{\mathrm{cur}}:(i,j)\in\mathcal T_{\mathrm{cl}}\}$
\EndFor
\State For $i\in S$, set $c_i^{\mathrm{cur}}=\lVert x_i-\mu_{g_i^{\mathrm{cur}}}\rVert^2$ and $\Delta_{i,g}=\lVert x_i-\mu_g\rVert^2-c_i^{\mathrm{cur}}$ for $g\in K_i$
\State Create binaries $z_{i,g}$ for $i\in S$, $g\in K_i$
\State Add constraints: (i) $\sum_{g\in K_i} z_{i,g}=1$; (ii) $z_{u,g}+z_{v,g}\le 1$ for $(u,v)\in\mathcal T_{\mathrm{cl}}$ with $u,v\in S$ and $g\in K_u\cap K_v$; (iii) set $\mathrm{UB}(z_{u,g_v^{\mathrm{cur}}})=0$ for $(u,v)\in\mathcal T_{\mathrm{cl}}$ with $u\in S$, $v\notin S$
\State Set objective $\min \sum_{i\in S}\sum_{g\in K_i}\Delta_{i,g}\,z_{i,g}$
\State Warm start: if $g_i^{\mathrm{cur}}\in K_i$ then set $z_{i,g_i^{\mathrm{cur}}}\gets 1$; solve with time limit; decode $\hat g_i \gets \arg\max_{g\in K_i} z_{i,g}$ for $i\in S$
\end{algorithmic}
\end{algorithm}

\paragraph{Size and complexity.}
The model has $\sum_{i\in S}|K_i|$ binaries, thus $O(|S|G)$ under normal operation, rising to $O(|S|K)$ only for nodes in or adjacent to CL violations due to candidate expansion.
Constraints include $|S|$ one-hot equalities; at most
\[
\sum_{(u,v)\in \mathcal T_{\mathrm{cl}}\cap (S\times S)} |K_u\cap K_v|
\]
CL inequalities (added only when $K_u\cap K_v\neq\varnothing$); and a linear number of cross-$S$ clamps from $\mathcal T_{\mathrm{cl}}$.

\section{Runtime and Computational Performance}
\label{sec:comp_analysis}

\subsection{Phase-Wise Cost Breakdown}
Iteration time is split into three phases: subset selection, restricted 0--1 ILP reassignment, and centroid recomputation. Under \textsc{Both} with IG, reassignment typically accounts for the largest share, while selection and centroid updates contribute comparable amounts (Table~\ref{tab:phase_breakdown_ig_both}). These shares vary with dataset size. In \texttt{iris} and \texttt{seeds}, reassignment dominates. In \texttt{skin}, selection and centroid updates account for a larger portion of time.

Two factors explain this behavior. First, the reassignment problem size per iteration, measured by the maximum number of binary variables, grows with $n$ (Figure~\ref{fig:ilp_size_vs_n_ig_both}). Second, optimization is restricted to an active subset whose mean fraction of $n$ decreases as $n$ increases. In \texttt{skin}, the active subset is below one percent of $n$, whereas in \texttt{iris} and \texttt{seeds} it is near one sixth. As a result, the integer program grows in absolute size with $n$, but its relative time share is smaller in the largest instance. Selection and centroid updates touch all points each iteration, which increases their time shares in \texttt{skin}.

% --- TABLA 1: Fases (sin SSE) ---
\begin{table}[h]
\centering
\small
\begin{tabular}{lccccc}
\toprule
Dataset & $n$ & Selection & ILP & Centroids & Max Binaries \\
\midrule
iris     & 150      & 12.6\% & 68.4\% & 8.5\%  & 56 \\
seeds    & 210      & 10.9\% & 71.7\% & 7.6\%  & 125 \\
hemi     & 1{,}955  & 15.4\% & 57.8\% & 13.7\% & 1{,}441 \\
pr2392   & 2{,}392  & 16.3\% & 56.3\% & 14.8\% & 1{,}574 \\
AC\_FL   & 7{,}195  & 16.4\% & 58.8\% & 15.3\% & 4{,}756 \\
rds\_cnt & 10{,}000 & 17.3\% & 56.2\% & 17.1\% & 6{,}647 \\
HTRU2\_L & 17{,}898 & 18.4\% & 51.1\% & 17.8\% & 10{,}635 \\
skin     & 245{,}057 & 35.0\% & 25.6\% & 25.9\% & 49{,}507 \\
\bottomrule
\end{tabular}
\caption{Phase-wise time shares (sum over iterations) and the maximum number of binaries per iteration for IG under \textsc{Both}.}
\label{tab:phase_breakdown_ig_both}
\end{table}

The instrumented phases account for most of the total time. The remainder corresponds to orchestration and utilities, including data structures, input output, and logging, so the columns do not sum to one. For small and medium instances, improvements should prioritize reassignment. As $n$ increases, optimizations should target selection and centroid updates through sampling schemes or partial updates.

\begin{figure}[H]
\centering
\includegraphics[width=0.9\linewidth]{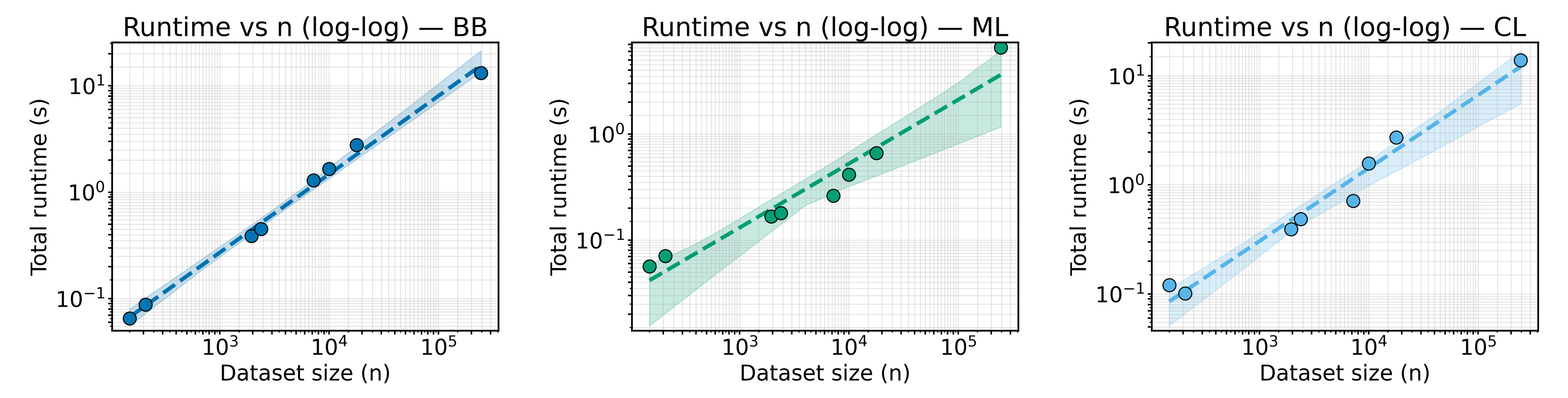}
\caption{Total runtime vs.\ $n$ (log--log). Runtime grows by a factor of $201$ for a factor of $1{,}633$ increase in dataset size.}
\label{fig:runtime_vs_n_ig_both}
\end{figure}

\begin{figure}[H]
\centering
\includegraphics[width=0.9\linewidth]{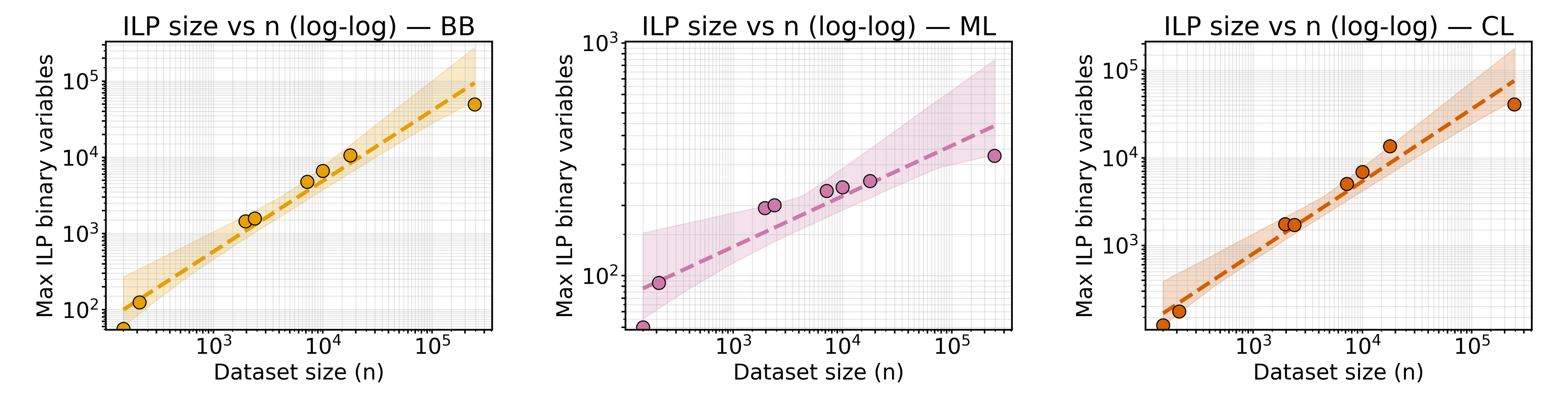}
\caption{Maximum ILP binaries vs.\ $n$ (log--log). Across the range, the maximum binaries increase by a factor of $884$.}
\label{fig:ilp_size_vs_n_ig_both}
\end{figure}

\begin{table}[h]
\centering
\small
\setlength{\tabcolsep}{5pt}
\begin{tabular}{l l r r r r r r}
\toprule
Dataset & Type & $|S|/n$ (IG) & $|S|/n$ (CA) & $|V|/n$ (IG) & $|V|/n$ (CA) & Viol (IG) & Viol (CA) \\
\midrule
HTRU2\_L & BB & 0.136 & 0.187 & 0.078 & 0.093 & 0 & 18 \\
HTRU2\_L & CL & 0.118 & 0.185 & 0.071 & 0.101 & 0 & 22 \\
HTRU2\_L & ML & 0.000 & 0.000 & 0.000 & 0.000 & 0 & 0 \\
hemi & BB & 0.187 & 0.327 & 0.140 & 0.269 & 0 & 11 \\
hemi & CL & 0.148 & 0.168 & 0.114 & 0.136 & 0 & 0 \\
hemi & ML & 0.000 & 0.000 & 0.000 & 0.000 & 0 & 0 \\
iris & BB & 0.097 & 0.097 & 0.000 & 0.000 & 0 & 0 \\
iris & CL & 0.100 & 0.100 & 0.000 & 0.053 & 0 & 0 \\
iris & ML & 0.000 & 0.000 & 0.000 & 0.000 & 0 & 0 \\
pr2392 & BB & 0.149 & 0.154 & 0.092 & 0.096 & 0 & 16 \\
pr2392 & CL & 0.115 & 0.145 & 0.080 & 0.111 & 0 & 3 \\
pr2392 & ML & 0.000 & 0.000 & 0.000 & 0.000 & 0 & 0 \\
rds\_cnt & BB & 0.153 & 0.155 & 0.111 & 0.100 & 0 & 66 \\
rds\_cnt & CL & 0.113 & 0.219 & 0.089 & 0.178 & 0 & 9 \\
rds\_cnt & ML & 0.000 & 0.000 & 0.000 & 0.000 & 0 & 0 \\
seeds & BB & 0.095 & 0.095 & 0.000 & 0.038 & 0 & 0 \\
seeds & CL & 0.150 & 0.100 & 0.014 & 0.029 & 0 & 0 \\
seeds & ML & 0.000 & 0.000 & 0.000 & 0.000 & 0 & 0 \\
skin & BB & 0.047 & 0.092 & 0.028 & 0.055 & 0 & 0 \\
skin & CL & 0.027 & 0.036 & 0.017 & 0.022 & 0 & 0 \\
skin & ML & 0.000 & 0.000 & 0.000 & 0.000 & 0 & 0 \\
\bottomrule
\end{tabular}
\caption{Working-set statistics and feasibility summary for PASS-IG and PASS-CA. Values are medians over iterations within a run. Rows for ML-only are included for completeness; in this setting $V=\emptyset$ and $S=\emptyset$ by construction.}
\label{tab:subset_summary_ig_ca_app}
\end{table}

\subsection{Working-Set Statistics: PASS-IG vs.\ PASS-CA}
\label{app:ig_vs_ca_workingset}

This section compares the working-set behavior of PASS-IG and PASS-CA under the same budget. The median working-set fraction $|S|/n$ and the median violation-set fraction $|V|/n$ are reported together with the final number of violated constraints. PASS-IG reaches feasibility across the reported instances while selecting smaller working sets, whereas PASS-CA may leave residual violations in several \textsc{Both} settings.

\subsection{Time and Memory Scaling with Problem Size}
Total runtime, computed as the sum of per-iteration times, follows a power law in $n$ (Figure~\ref{fig:runtime_vs_n_ig_both}), with $R^2 = 0.994$. From \texttt{iris} ($n = 150$) to \texttt{skin} ($n = 245{,}057$), runtime increases from $0.065$ s to $13.179$ s, a factor of $200$ for a factor of $1{,}600$ in dataset size. The maximum number of binary variables in the per-iteration integer program also scales with $n$ and fits a power law with $R^2 = 0.969$. Over the same range, this count increases by a factor of $884$ (Figure~\ref{fig:ilp_size_vs_n_ig_both}).

This behavior matches the phase-wise profile. The 0--1 ILP is solved on an active subset whose mean fraction of $n$ decreases with $n$. In \texttt{skin}, the active subset is below one percent of $n$, whereas in \texttt{iris} and \texttt{seeds} it is near one sixth. Consequently, the ILP grows in absolute size with $n$, but its time share is smaller in the largest instance. Selection and centroid updates process all points each iteration, so their shares increase with $n$. Overall, runtime grows sublinearly in $n$, while the memory proxy scales more closely with $n$.

\section{q-PCKMeans: A Quantum Pairwise Constrained Clustering Algorithm}

\subsection{Quantum Variational Framework} \label{sec:qvf}

\paragraph{QUBO formulation}
To enable scalable quantum optimization, the problem is restricted to a subset \(S\subseteq\mathcal{S}\) of ambiguous or constraint-violating samples, with \(|S| \ll |\mathcal{S}|\), yielding a reduced 0--1 quadratic program with one-hot, ML, and CL constraints:
\begin{subequations}\label{eqn:qubo}
\begin{align}
\min_{d}\quad
& \sum_{i\in S}\sum_{g\in\mathcal{K}}\Delta_{i,g}\,d_{i,g} \\
\text{s.t.}\quad
& \sum_{g\in\mathcal{K}}d_{i,g}=1,
  &&\forall\,i\in S,\label{eqn:qubo:onehot}\\
& d_{u,g}=d_{v,g},
  &&\forall\,(u,v)\in\mathcal{T}_{ml}\cap(S\times S),\;\forall\,g\in\mathcal{K},\label{eqn:qubo:ml}\\
& d_{u,g}+d_{v,g}\le1,
  &&\forall\,(u,v)\in\mathcal{T}_{cl}\cap(S\times S),\;\forall\,g\in\mathcal{K},\label{eqn:qubo:cl}\\
& d_{i,g}\in\{0,1\},
  &&\forall\,i\in S,\;g\in\mathcal{K}.
\end{align}
\end{subequations}
For edges with one endpoint in \(S\) and the other in \(\mathcal{S}\setminus S\), linear cross-edge penalties forbid assigning the in-\(S\) endpoint to the fixed label of its partner, mirroring the reduced classical formulation. Concretely, for \((u,v)\in\mathcal{T}_{cl}\) with \(u\in S\), \(v\notin S\) and fixed label \(g^{\star}(v)\), add \(\lambda\, d_{u,g^{\star}(v)}\). For \((u,v)\in\mathcal{T}_{ml}\) with \(u\in S\), \(v\notin S\), add
\[
\lambda\!\sum_{g\in\mathcal K}\! d_{u,g}\;-\;2\lambda\, d_{u,g^{\star}(v)},
\]
which, under the one-hot constraint \(\sum_{g} d_{u,g}=1\), is equivalent up to an additive constant to the penalty \(\lambda\,(1-2\,d_{u,g^{\star}(v)})\) that encourages \(u\) to match \(g^{\star}(v)\). The symmetric cases with \(v\in S\), \(u\notin S\) are handled analogously.

The coefficients \(\Delta_{i,g}\) capture the incremental sum of squared errors:
\(
  \Delta_{i,g}
  =
  \|x_i-\mu_g\|^2
  -
  \|x_i-\mu_{g^*(i)}\|^2,
\)
where \(g^*(i)\) denotes the current cluster assignment of sample \(i\).
The binary decision variable \(d_{i,g}=1\) indicates reassigning \(i\) to cluster \(g\).

Problem~\eqref{eqn:qubo} is converted to a QUBO suitable for quantum optimization by embedding the hard constraints as penalty terms in a Hamiltonian. Let
\(
 \lambda \;=\; \sum_{i\in S}\sum_{g\in\mathcal K}\bigl|\Delta_{i,g}\bigr| \;+\;\varepsilon,\ \varepsilon>0,
\)
which is strictly larger than the maximum possible variation of the linear cost over all assignments. Any infeasible solution therefore incurs an energy penalty of at least \(\lambda\), guaranteeing it ranks above every feasible configuration, as shown in Proposition~\ref{prop:lambda}. With this choice of \(\lambda\), one-hot, must-link, and cannot-link constraints translate into the following quadratic penalties:
\begin{subequations}
\begin{align}
H_{\text{one-hot}}
&= \lambda \sum_{i\in S}
    \Bigl(1 - \sum_{g\in\mathcal{K}} d_{i,g}\Bigr)^2,\\
H_{\text{ML}}
&= \lambda \sum_{(u,v)\in\mathcal{T}_{ml}\cap(S\times S)}
    \sum_{g\in\mathcal{K}} (d_{u,g}-d_{v,g})^2,\\
H_{\text{CL}}
&= \lambda \sum_{(u,v)\in\mathcal{T}_{cl}\cap(S\times S)}
    \sum_{g\in\mathcal{K}} d_{u,g}\,d_{v,g}.
\end{align}
\end{subequations}

\noindent The full QUBO Hamiltonian is
\begin{equation}\label{eqn:hamiltonian_2}
H(d)
=
\sum_{i\in S}\sum_{g\in\mathcal{K}}\Delta_{i,g}\,d_{i,g}
\;+\;
H_{\text{one-hot}} + H_{\text{ML}} + H_{\text{CL}}
\;+\; H_{\text{cross}},
\end{equation}
where \(H_{\text{cross}}\) collects the linear cross-edge penalties described above. Up to an additive constant, the Hamiltonian can be written in matrix form as
\(
H(d) = d^{\top} Q\, d \;+\; c^{\top} d.
\)
This formulation yields \(n_{\text{qubo}} = |S|\times K\) binary variables and encodes one-hot, ML, and CL constraints through penalty terms, with a one-hot preserving mixer that remains within the one-hot subspace. Leveraging this invariant subspace, an additive improvement bound is derived for depth \(p=1\) on the expected performance of the algorithm.

A variational algorithm is introduced that operates on a reduced subspace of ambiguous samples while encoding hard constraints via penalties. The framework has three parts: a one-hot preserving mixer that restricts evolution to the feasible one-hot subspace; an additive improvement bound that applies at depth \(p=1\); and a warm-start depth reduction showing how a classical seed reduces circuit depth in practice.

\paragraph{One-Hot Preserving Mixer}
\label{ssec:mixer}
To guarantee that the one-hot constraint remains satisfied throughout the variational evolution, a point-wise XY mixer is used that acts within the one-hot subspace. Must-link and cannot-link constraints are encoded via the penalty Hamiltonian described in Section~\ref{sec:qvf}.
Although intermediate QAOA superpositions can carry amplitude on must-link and cannot-link violating basis states, the penalty terms shift those states upward in energy according to \(\lambda\).
Throughout, let \(X\), \(Y\), and \(Z\) denote the single-qubit Pauli matrices:
\[
  X = \begin{pmatrix}0 & 1 \\ 1 & 0\end{pmatrix}, \quad
  Y = \begin{pmatrix}0 & -i \\ i & 0\end{pmatrix}, \quad
  Z = \begin{pmatrix}1 & 0 \\ 0 & -1\end{pmatrix}.
\]
For a qubit indexed by \(\ell(i,g) = K(i-1) + g\), write \(X_{i,g}\) for \(X\) acting on that qubit, and likewise \(Y_{i,g}\) and \(Z_{i,g}\).

\begin{definition}[Point-wise XY Mixer]
\label{def:xy_mixer}
For each ambiguous sample \(i\in S\) and each unordered pair of clusters
\((g,h)\) with \(g<h\), define the two-qubit unitary
\(
  U_{i,(g,h)}(\beta)
  =
  \exp\!\bigl(-\,i\beta\,[X_{i,g}X_{i,h}+Y_{i,g}Y_{i,h}]\bigr),
\)
where \(X_{i,g},Y_{i,g}\) act on qubit \(\ell(i,g)=K(i-1)+g\).
The global mixer is the product
\(
  U_M(\beta)=\prod_{i\in S}\prod_{g<h} U_{i,(g,h)}(\beta).
\)
\end{definition}

\begin{figure}[ht]
  \centering
  \begin{quantikz}[row sep=0.2cm, column sep=0.7cm]
    \lstick{$\ket{d_{i,1}}$} & \qw & \gate[wires=2]{e^{-i\beta(X\otimes X + Y\otimes Y)}} & \qw & \gate[wires=2]{e^{-i\beta(X\otimes X + Y\otimes Y)}} & \qw \\
    \lstick{$\ket{d_{i,2}}$} & \qw & \qw & \qw & \qw & \qw \\
    \lstick{$\ket{d_{i,3}}$} & \qw & \qw & \gate[wires=2]{e^{-i\beta(X\otimes X + Y\otimes Y)}} & \qw & \qw
  \end{quantikz}
  \caption{One-hot preserving mixer for \(K=3\): apply the three two-qubit blocks \(e^{-i\beta(X_{i,g}X_{i,h}+Y_{i,g}Y_{i,h})}\) for the pairs \((g,h)=(1,2),(1,3),(2,3)\). No single-qubit \(R_X\) rotations are used.}
  \label{fig:xy-mixer-full}
\end{figure}

\begin{lemma}[Invariance of the One-Hot Subspace]
\label{lem:invariance_onehot}
Let
\(
  \mathcal{H}_{\mathrm{one\text{-}hot}}
  =
  \mathrm{Span}\bigl\{\ket{d} :
     \sum_{g=1}^{K} d_{i,g} = 1,\ \forall\,i\in S\bigr\}.
\)
Then, for all \(\beta\),
\(
  U_M(\beta)\,\mathcal{H}_{\mathrm{one\text{-}hot}}
  \subseteq
  \mathcal{H}_{\mathrm{one\text{-}hot}}.
\)
\end{lemma}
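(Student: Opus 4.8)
The plan is to exhibit a family of operators that (i) are conserved by every elementary block of the mixer and (ii) cut out $\mathcal{H}_{\mathrm{one\text{-}hot}}$ as a joint eigenspace; invariance of that eigenspace then follows immediately. The natural choice is the per-sample Hamming-weight operator $\hat N_i := \sum_{g=1}^{k}\tfrac12\bigl(I - Z_{i,g}\bigr)$, which has eigenvalue $\sum_{g} d_{i,g}$ on the computational basis state $\ket{d}$, so that $\mathcal{H}_{\mathrm{one\text{-}hot}} = \bigcap_{i\in S}\ker\bigl(\hat N_i - I\bigr)$.

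First I would compute the action of the two-qubit generator $A := X\otimes X + Y\otimes Y$ on the two-qubit computational basis: $A\ket{00}=A\ket{11}=0$, while $A\ket{01}=2\ket{10}$ and $A\ket{10}=2\ket{01}$. Hence $A$ leaves invariant the zero/two-excitation subspace $\mathrm{span}\{\ket{00},\ket{11}\}$ (annihilating it) and the one-excitation subspace $\mathrm{span}\{\ket{01},\ket{10}\}$ (acting as $2\sigma_x$ there). Exponentiating, $e^{-i\beta A}$ is the identity on $\ket{00},\ket{11}$ and equals $\cos(2\beta)I - i\sin(2\beta)\sigma_x$ on $\mathrm{span}\{\ket{01},\ket{10}\}$; in every case the number of $1$'s among the two qubits is unchanged. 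Tensoring with the identity on all remaining qubits, each block $U_{i,(g,h)}(\beta)=e^{-i\beta(X_{i,g}X_{i,h}+Y_{i,g}Y_{i,h})}$ therefore conserves $d_{i,g}+d_{i,h}$, hence commutes with $\hat N_i$ (it alters only the $(i,g)$ and $(i,h)$ occupations while fixing their sum) and trivially with every $\hat N_j$, $j\neq i$ (disjoint support).

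Then I would assemble the global claim. Since $U_M(\beta)=\prod_{i\in S}\prod_{g<h}U_{i,(g,h)}(\beta)$ is a product of unitaries each of which commutes with every $\hat N_j$, the product $U_M(\beta)$ commutes with every $\hat N_j$ as well; crucially, no commutation among the individual blocks is required. A unitary commuting with all $\hat N_j$ is block-diagonal with respect to their simultaneous eigenspace decomposition, hence maps the joint eigenspace $\bigcap_{j\in S}\ker(\hat N_j - I) = \mathcal{H}_{\mathrm{one\text{-}hot}}$ into (indeed onto) itself, which is the assertion.

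I do not anticipate a genuine obstacle; the only point deserving care is to avoid claiming that the generators $X_{i,g}X_{i,h}+Y_{i,g}Y_{i,h}$ for pairs sharing a cluster index commute (they do not), and instead to route the argument entirely through the conserved operators $\hat N_i$, for which the Trotter-product form of $U_M(\beta)$ is harmless because each factor separately conserves them. A minor sanity check is the equivalence, for a unitary, between ``preserves the subspace'' and ``is block-diagonal across the Hamming-weight sectors,'' which is immediate once each factor has that block structure.
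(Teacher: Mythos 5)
Your proof is correct and follows essentially the same route as the paper: both arguments identify the per-sample number operator $N_i=\sum_{g}\tfrac12(I-Z_{i,g})$ as a conserved quantity, characterize $\mathcal{H}_{\mathrm{one\text{-}hot}}$ as the joint eigenspace $N_i=1$, and conclude from the fact that every block $U_{i,(g,h)}(\beta)$ (hence the product $U_M(\beta)$) commutes with each $N_j$. The only difference is cosmetic: the paper checks $[N_i,\,X_{i,g}X_{i,h}+Y_{i,g}Y_{i,h}]=0$ via the Pauli commutation relations, whereas you verify excitation-number conservation by explicit action on the two-qubit computational basis.
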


\begin{proof}
Fix a sample \(i\) and define the population operator
\(
  N_i = \sum_{g=1}^{K} \frac{1 - Z_{i,g}}{2},
\)
whose eigenvalue on a computational basis state is the Hamming weight \(\sum_g d_{i,g}\).
A state \(\ket{\psi}\) satisfies the one-hot constraint if and only if \(N_i\ket{\psi} = \ket{\psi}\).
Using the commutation relations
\([X,Z] = 2iY\) and \([Y,Z] = -2iX\), one checks
\(
  [N_i,\;X_{i,g}X_{i,h} + Y_{i,g}Y_{i,h}]
  = 0
  \text{ for all }g<h.
\)
Hence each local mixer \(U_{i,(g,h)}(\beta)\) commutes with \(N_i\), and so does the global \(U_M(\beta)\).
Therefore \(U_M(\beta)\) preserves the eigenspace \(N_i = 1\) for every \(i\), and hence preserves \(\mathcal{H}_{\mathrm{one\text{-}hot}}\).
\end{proof}

\begin{remark}
The mixer preserves one-hot but does not enforce must-link or cannot-link relations.
Any intermediate state that violates these relations remains in the computational basis yet incurs an energy penalty of at least \(\lambda>\sum_{i,g}|\Delta_{i,g}|\) as in Proposition~\ref{prop:lambda}.
\end{remark}

\begin{lemma}[Gate Count and Circuit Depth]
\label{lem:gatecount}
Implementing \(U_M(\beta)\) uses \(\binom{K}{2}\) logical \(XX{+}YY\) blocks per sample \(i\in S\), for a total of \(|S|\binom{K}{2} = O(K^2|S|)\) blocks. Prior to hardware transpilation, the circuit depth is \(O(K)\) per sample and remains \(O(K)\) overall, since all samples can be processed in parallel.
\end{lemma}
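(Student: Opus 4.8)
The plan is to establish the two claims—block count and depth—separately, both by direct counting arguments that exploit the product structure of $U_M(\beta)$. For the block count, I would observe that the global mixer is defined as $U_M(\beta)=\prod_{i\in S}\prod_{g<h}U_{i,(g,h)}(\beta)$, so each factor $U_{i,(g,h)}(\beta)=\exp(-i\beta[X_{i,g}X_{i,h}+Y_{i,g}Y_{i,h}])$ is exactly one logical $XX{+}YY$ block. Counting the unordered pairs $(g,h)$ with $g<h$ over $\mathcal{K}=\{1,\dots,k\}$ gives $\binom{k}{2}$ blocks for each fixed sample $i$, and summing over the $|S|$ samples gives $|S|\binom{k}{2}$ blocks in total; since $\binom{k}{2}=k(k-1)/2$, this is $O(k^2|S|)$.

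For the depth claim, I would argue that within a single sample $i$, the $k$ qubits $\{\ell(i,1),\dots,\ell(i,k)\}$ and the $\binom{k}{2}$ two-qubit $XX{+}YY$ blocks acting on them form (the edge set of) the complete graph $K_k$, and the minimum number of layers needed to schedule all these blocks so that no two blocks in a layer share a qubit is exactly the edge-chromatic number of $K_k$. By Vizing's theorem (or the classical result on $1$-factorizations of complete graphs), $\chi'(K_k)$ is $k-1$ when $k$ is even and $k$ when $k$ is odd, hence $O(k)$; one can also simply invoke the elementary round-robin tournament schedule, which produces such a coloring in $k$ or $k-1$ rounds without citing Vizing. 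Therefore the per-sample subcircuit has depth $O(k)$. Finally, because distinct samples $i\neq i'$ act on disjoint qubit registers (the index map $\ell(i,g)=k(i-1)+g$ partitions the $N=|S|k$ qubits into $|S|$ disjoint blocks of size $k$), all per-sample subcircuits commute and can be executed in parallel, so the overall depth remains $O(k)$, independent of $|S|$.

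The main obstacle—really more of a modeling subtlety than a genuine difficulty—is pinning down what "circuit depth" means here and making the parallel-scheduling claim precise: the bound holds for the abstract logical circuit in which an $XX{+}YY$ block counts as a single gate layer and two-qubit gates on disjoint qubit pairs may be applied simultaneously, which is exactly why the statement is hedged with "prior to hardware transpilation." I would state this assumption explicitly at the start of the proof. A secondary point to handle carefully is that the $\binom{k}{2}$ blocks within a sample do not mutually commute, so the depth is genuinely governed by the scheduling/edge-coloring argument rather than by collapsing them; I would note this but it does not affect the $O(k)$ conclusion. Everything else is routine arithmetic on $\binom{k}{2}$ and the observation that the qubit registers of different samples are disjoint.
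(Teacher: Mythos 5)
Your proposal is correct and follows essentially the same route as the paper's proof: a direct count of the $\binom{k}{2}$ pairs per sample for the block total, an edge-coloring/1-factorization of $K_k$ (with $k-1$ matchings for even $k$ and $k$ for odd $k$) for the per-sample $O(k)$ depth, and disjointness of the per-sample registers for the overall parallelism. Your added remarks on fixing the logical-depth model and on the non-commutativity of the intra-sample blocks are sensible clarifications but do not change the argument.
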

\begin{proof}
For each sample \(i\) and each pair \(g<h\), the unitary
\(\exp[-i\beta(X_{i,g}X_{i,h}+Y_{i,g}Y_{i,h})]\)
is one \(XX{+}YY\) block. There are \(\binom{K}{2}\) such pairs, so the total count is \(|S|\binom{K}{2}=O(K^2|S|)\).
To bound the depth for a fixed \(i\), view these pairs as edges of the complete graph \(K_K\).
When \(K\) is even, \(K_K\) decomposes into \(K-1\) perfect matchings; when \(K\) is odd, its edge-chromatic number is \(K\), yielding \(K\) matchings each acting on disjoint qubit pairs.
Thus the mixer uses \(K-1\) layers if \(K\) is even and \(K\) layers if \(K\) is odd, which is \(O(K)\).
Each \(XX{+}YY\) block further expands into a constant-size native pattern, which does not change the asymptotic depth.
Since different samples act on disjoint registers, all sample mixers can run in parallel, preserving the \(O(K)\) overall depth.
\end{proof}

\noindent\textit{Native gate accounting.}
In a native two-qubit gate set with \(RXX\) and \(RYY\), each logical \(XX{+}YY\) block maps to two native two-qubit gates per pair, for a total of \(2\binom{K}{2}\) native two-qubit gates per sample.

\begin{corollary}[Depth on Linear Connectivity]
\label{cor:depth_linear}
When the hardware graph is a linear nearest-neighbour chain, the
edges of \(K_K\) require an additional routing step; one schedule yields depth \(O(K^2)\) per sample and \(O(K^2)\) overall, still independent of \(|S|\) due to inter-sample parallelism.
\end{corollary}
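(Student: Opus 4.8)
The plan is to reduce the statement to a routing problem on the line and then exhibit one concrete schedule. By Lemma~\ref{lem:gatecount}, for each sample $i\in S$ we must realize the $\binom{k}{2}$ logical blocks $\exp[-i\beta(X_{i,g}X_{i,h}+Y_{i,g}Y_{i,h})]$, one per unordered pair $g<h$ of cluster qubits. On an all-to-all device these pack into $O(k)$ disjoint layers via an edge-colouring of $K_k$; on a linear nearest-neighbour chain, however, a block on a non-adjacent pair can only be executed after SWAPs bring the two wires together. The single fact I would exploit is that a SWAP is a two-qubit gate acting on the very pair of wires being coupled, so the composite ``relocate-and-interact'' unitary $\mathrm{SWAP}\cdot e^{-i\beta(XX+YY)}$ is again a two-qubit unitary and compiles into a constant-size native pattern; hence every time two qubits of a sample become adjacent on the chain we may, at $O(1)$ depth cost, simultaneously discharge their interaction.

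Next I would give an explicit \emph{sweep} schedule on the $k$ consecutive wires allocated to sample $i$. In stage $s=1,\dots,k-1$, take the wire currently carrying logical index $s$ and bubble it rightward through a sequence of $k-s$ adjacent transpositions, fusing each transposition with the $XX{+}YY$ block of the pair that is momentarily adjacent; a short induction shows that at the start of stage $s$ the wires read $(s,s+1,\dots,k,s-1,\dots,1)$, that during stage $s$ index $s$ meets exactly the indices $s+1,\dots,k$, and that after the stage the order becomes $(s+1,\dots,k,s,s-1,\dots,1)$. The $k-s$ transpositions of stage $s$ are forced to run sequentially (they share the moving wire), so the stage costs $O(k-s)$ layers and the whole sweep costs $\sum_{s=1}^{k-1}O(k-s)=O(k^2)$ per sample, during which each of the $\binom{k}{2}$ pairs is made adjacent exactly once and its block applied exactly once. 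If a later QAOA layer requires the original wire labelling, one more $O(k^2)$ (in fact $O(k)$) reversal restores it; for the depth-$p{=}1$ regime of the improvement bound this is unnecessary, since the diagonal cost layer can be applied on whatever wires currently hold each logical qubit.

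For the global bound I would observe that the registers of distinct samples occupy disjoint contiguous segments of the chain, that the sweep never moves a wire out of its own segment, and hence that the per-sample schedules act on pairwise disjoint sets of wires and couplers; by the same parallelism argument used in Lemma~\ref{lem:gatecount} they execute simultaneously, so the overall depth remains $O(k^2)$ with no dependence on $|S|$ (beyond needing $k|S|$ physical qubits).

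The main obstacle is bookkeeping rather than anything deep: one must check that the sweep pattern realizes each pair once and only once (so no interaction is skipped or double-applied), track the permutation carried by the sweep so that subsequent layers address the correct wires, and verify that fusing $\mathrm{SWAP}$ with the $XX{+}YY$ block stays constant-size on the target native gate set (by a two-qubit $\mathrm{KAK}$-type decomposition). I would also note, as a remark, that interleaving the stages turns the sweep into an odd--even transposition network and lowers the per-sample depth to $O(k)$, so the $O(k^2)$ bound stated here is a deliberately conservative, easy-to-audit estimate.
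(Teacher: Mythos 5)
Your proposal is correct; note that the paper states this corollary without any proof, so your write-up supplies the concrete schedule that the paper merely asserts exists. Your bubble-sweep construction is sound: the induction on the wire order $(s,s+1,\dots,k,s-1,\dots,1)$ at the start of stage $s$ checks out, each unordered pair $(s,t)$ with $t>s$ becomes adjacent exactly once (during stage $s$), the $k-s$ transpositions of a stage are indeed forced to be sequential because they share the moving wire, and $\sum_{s=1}^{k-1}(k-s)=\binom{k}{2}$ fused $\mathrm{SWAP}\cdot e^{-i\beta(XX+YY)}$ blocks at $O(1)$ native depth each give $O(k^2)$ per sample. The fusion step is legitimate since any two-qubit unitary admits a constant-size decomposition, and the disjoint-contiguous-segment argument correctly carries over the inter-sample parallelism from Lemma~\ref{lem:gatecount}, so the overall depth is independent of $|S|$. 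Your approach differs from the paper's implicit one only in that the corollary's phrasing ("an additional routing step") suggests inserting SWAP layers on top of the $O(k)$ edge-colouring schedule of the lemma, whereas you merge routing and interaction into a single sorting-network pass; both yield $O(k^2)$, but yours is easier to audit because the once-and-only-once coverage of the edges of $K_k$ falls out of the bubble-sort invariant rather than from a separate routing analysis. Your closing remark is also worth keeping: the odd--even transposition (brick-wall) network realizes all $\binom{k}{2}$ adjacencies in $O(k)$ layers on a line, so the corollary's $O(k^2)$ bound is valid but deliberately loose.
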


\begin{proposition}[Penalty Weight Sufficiency]
\label{prop:lambda}
Let
\(
  \lambda \;>\; \sum_{i\in S}\sum_{g=1}^{K} \bigl|\Delta_{i,g}\bigr|.
\)
Then any assignment that violates at least one constraint in
Eqs.~\eqref{eqn:qubo:onehot}--\eqref{eqn:qubo:cl}
has energy strictly larger than any feasible assignment.
\end{proposition}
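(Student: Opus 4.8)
The plan is to compare the energy of an arbitrary infeasible assignment with that of an arbitrary feasible one, by splitting the Hamiltonian $H(d)=L(d)+P(d)$ into its linear ``cost'' part $L(d)=\sum_{i\in S}\sum_{g}\Delta_{i,g}d_{i,g}$ and its penalty part $P(d)=H_{\text{one-hot}}+H_{\text{ML}}+H_{\text{CL}}+H_{\text{cross}}$. First I would observe that $P(d)\ge 0$ for every $d$, since each of the four summands is a sum of nonnegative terms: $(1-\sum_g d_{i,g})^2\ge 0$, $(d_{u,g}-d_{v,g})^2\ge 0$, $d_{u,g}d_{v,g}\ge 0$, and the cross-edge terms are built from the one-hot-equivalent forms that are nonnegative on binary inputs (for the CL case $\lambda d_{u,g^\star(v)}\ge 0$ trivially, and for the ML case the term $\lambda\sum_g d_{u,g}-2\lambda d_{u,g^\star(v)}$ equals $\lambda(1-2d_{u,g^\star(v)})+\lambda(\sum_g d_{u,g}-1)$, which on one-hot states is $\pm\lambda$; I will fold the constant so that the penalty contribution is $0$ when $u$ matches $g^\star(v)$ and $2\lambda$ otherwise). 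Next, for a \emph{feasible} assignment $d^{\text{feas}}$ all penalty terms vanish, so $H(d^{\text{feas}})=L(d^{\text{feas}})$, and hence $H(d^{\text{feas}})\le \sum_{i\in S}\sum_g |\Delta_{i,g}|=:M$. Symmetrically $H(d^{\text{feas}})\ge -M$, so every feasible energy lies in $[-M,M]$.

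The crux is the lower bound for an infeasible $d^{\text{inf}}$. The key step is that each penalty summand, when \emph{nonzero}, is at least $\lambda$: a violated one-hot term contributes $\lambda(1-\sum_g d_{i,g})^2\ge\lambda$ since $\sum_g d_{i,g}$ is a nonnegative integer $\ne 1$ (so the squared deviation is $\ge 1$); a violated ML term contributes $\lambda\sum_g(d_{u,g}-d_{v,g})^2\ge\lambda$ since at least one $g$ has $d_{u,g}\ne d_{v,g}$; a violated CL term contributes $\lambda d_{u,g}d_{v,g}=\lambda$ for the offending $g$; and a violated cross-edge constraint contributes at least $\lambda$ by the accounting above. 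Since $P(d^{\text{inf}})\ge\lambda$ and $L(d^{\text{inf}})\ge -M$, we get $H(d^{\text{inf}})\ge \lambda - M > M \ge H(d^{\text{feas}})$, where the strict inequality uses the hypothesis $\lambda > M=\sum_{i\in S}\sum_g|\Delta_{i,g}|$ (equivalently $\lambda=M+\varepsilon$ with $\varepsilon>0$ as in the main text). This gives the claimed strict separation.

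The main obstacle I anticipate is bookkeeping around the cross-edge terms, whose stated form $\lambda\sum_g d_{u,g}-2\lambda d_{u,g^\star(v)}$ is only ``equivalent up to an additive constant'' to a clean penalty; I need to pin down that additive constant and confirm that after absorbing it the minimum feasible-cost value is still bounded by $M$ and that a cross-edge violation still costs a full $\lambda$ above the feasible band. A secondary subtlety is that feasibility here means satisfying \eqref{eqn:qubo:onehot}–\eqref{eqn:qubo:cl} \emph{and} the cross-edge conditions simultaneously; I should state explicitly that ``feasible'' refers to the full constraint set so that $P$ vanishes exactly on that set, and note that the integrality of the $d_{i,g}$ is what forces the ``$\ge 1$'' gap in each violated quadratic term (the argument fails over the continuous relaxation, which is fine since QAOA measures computational-basis states). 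Once these points are nailed down, the inequality chain above is immediate.
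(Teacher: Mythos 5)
Your overall strategy matches the paper's: each violated penalty term contributes at least $\lambda$ because the variables are binary (a nonzero squared deviation or product of bits is at least $1$), while the linear cost $L(d)=\sum_{i\in S}\sum_{g}\Delta_{i,g}d_{i,g}$ can only vary by $\sum_{i,g}|\Delta_{i,g}|$ across assignments; the paper compresses this into two sentences and you supply the quantization details. However, your final inequality chain contains a genuine error. You bound $H(d^{\mathrm{feas}})\le M$ and $L(d^{\mathrm{inf}})\ge -M$ with $M=\sum_{i,g}|\Delta_{i,g}|$, and then assert $H(d^{\mathrm{inf}})\ge\lambda-M>M\ge H(d^{\mathrm{feas}})$ "by the hypothesis $\lambda>M$". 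But $\lambda-M>M$ is equivalent to $\lambda>2M$; the hypothesis only yields $\lambda-M>0$, so the middle inequality does not follow. The fix is to compare the two linear costs directly rather than through symmetric absolute bounds: with $M_{+}=\sum_{i,g}\max(\Delta_{i,g},0)$ and $M_{-}=\sum_{i,g}\max(-\Delta_{i,g},0)$ one has $L(d^{\mathrm{feas}})\le M_{+}$, $L(d^{\mathrm{inf}})\ge -M_{-}$, and $M_{+}+M_{-}=M$, hence $L(d^{\mathrm{feas}})-L(d^{\mathrm{inf}})\le M<\lambda\le P(d^{\mathrm{inf}})$ and therefore $H(d^{\mathrm{inf}})>H(d^{\mathrm{feas}})$. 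Equivalently, the range $\max_{d}L-\min_{d}L$ equals $M$, not $2M$; this is precisely the bound the paper's proof invokes, and once you substitute it your argument coincides with theirs.

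Your concern about the cross-edge terms is legitimate and is a point the paper's one-line proof silently skips. As written, $H_{\text{cross}}$ can be negative on some basis states (the must-link cross term $\lambda\sum_{g}d_{u,g}-2\lambda\,d_{u,g^{\star}(v)}$ evaluates to $-\lambda$ on a one-hot state matching $g^{\star}(v)$), so it neither vanishes on "feasible" assignments nor is it nonnegative. Either one normalizes it, as you propose, so that it is $0$ when consistent with the fixed labels and at least $\lambda$ otherwise, or one reads the proposition as concerning only the within-$S$ constraints of Eqs.~\eqref{eqn:qubo:onehot}--\eqref{eqn:qubo:cl}, in which case the variation of $H_{\text{cross}}$ must be absorbed into the bound that $\lambda$ has to exceed. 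Your proposed normalization is the right repair; just make sure the constant you fold in is accounted for consistently on both sides of the comparison.
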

\begin{proof}
Each violated constraint contributes at least \(\lambda\) to the Hamiltonian in Eq.~\eqref{eqn:hamiltonian_2}, while the linear cost term can differ by at most \(\sum_{i,g}|\Delta_{i,g}|\). Therefore any infeasible assignment incurs a net energy penalty exceeding that bound.
\end{proof}

\paragraph{Penalty scale}
The lower bound on \(\lambda\) that separates feasible from infeasible assignments can conflict with upper ranges that yield a nonzero depth \(p=1\) improvement bound. The additive improvement bound in Theorem~\ref{thm:additive_p1_full} applies when \(\lambda \le \bar\eta|S|/(2\kappa)\). In practice, a schedule for \(\lambda\) can be used: moderate values during the variational search and larger values for final evaluation that enforce feasibility separation. In particular, during the search phase choose
\[
\lambda \;\le\; \lambda_{\text{limit}}
\;:=\;
\frac{\bar\eta\,|S|}{32\,(K-1)\,\bigl(|\mathcal E_{\text{ML}}|+|\mathcal E_{\text{CL}}|\bigr)},
\]
with \(\mathcal E_{\text{ML}}:=\mathcal{T}_{ml}\cap(S\times S)\) and \(\mathcal E_{\text{CL}}:=\mathcal{T}_{cl}\cap(S\times S)\), and then increase \(\lambda\) at evaluation time to enforce feasibility separation.

For the running example with \(|S|=6\) and \(K=3\), the QUBO in Eq.~\eqref{eqn:qubo} uses \(n_{\text{qubo}}=K|S|=18\) binary variables. Choosing \(\lambda>\sum_{i\in S}\sum_{g}|\Delta_{i,g}|\) ensures any constraint violation incurs a penalty above any cost improvement. The mixer in Definition~\ref{def:xy_mixer} and Lemma~\ref{lem:invariance_onehot} applies three two-qubit \(XX{+}YY\) blocks per sample, and on a linear connectivity graph the routing schedule of Corollary~\ref{cor:depth_linear} yields depth \(O(K^2)\) per sample, independent of \(|S|\).

\subsection{Additive Improvement Theorem (\(p=1\))}
\label{ssec:additive-bound}

We analyze a single-layer \(p=1\) QAOA ansatz under a global budget of \(\mathcal{O}(10^{3})\) measurement shots. In this setting, the computational cost depends on \(|S|\). Remaining improvement potential is captured via the average ambiguity margin \(\bar\eta\), which dictates the additive gain from one QAOA layer.

\begin{definition}[Incremental cost and ambiguity margin]
\label{def:margin}
Let \(g^{\star}(i)\) denote the current cluster of sample \(i\) with respect
to the present centroids \(\{\mu_g\}_{g=1}^K\).
For \(i\in S\) define the point--wise margin
\(
  \eta_i
  =
  \max\!\Bigl\{0,\;
    \min_{g\neq g^{\star}(i)}
    \bigl(
      \|x_i-\mu_{g^{\star}(i)}\|^{2}
      -\|x_i-\mu_g\|^{2}
    \bigr)
  \Bigr\}.
\)
Thus \(\eta_i>0\) if and only if at least one alternative cluster shortens
the distance of \(x_i\) to its centroid.
The average ambiguity margin is
\(
  \bar\eta=\frac{1}{|S|}\sum_{i\in S}\eta_i.
\)
\end{definition}

\paragraph{QUBO cost}
Given
\(
  \Delta_{i,g}
  =
  \|x_i-\mu_g\|^{2}
  -
  \|x_i-\mu_{g^{\star}(i)}\|^{2},
\)
let
\(
  f(z)=\sum_{i\in S}\sum_{g=1}^{K}\Delta_{i,g}\,z_{i,g}.
\)
For the feasible bit-string \(d^{(0)}\) produced by the classical pre-solver, one has \(H(d^{(0)})=f(d^{(0)})\), as all one-hot, must-link, and cannot-link constraints are satisfied, nullifying penalty terms.

\begin{theorem}[Additive improvement with one layer \(p=1\) under penalties]
\label{thm:additive_p1_full}
Let \(\ket{d^{(0)}}\in\mathcal H_{\mathrm{one\text{-}hot}}\) be a feasible state and decompose the Hamiltonian as \(H = f(\hat d) + H_{\text{pen}}\), where \(H_{\text{pen}} = H_{\text{one-hot}} + H_{\text{ML}} + H_{\text{CL}}\).
Let \(B=\sum_{i\in S}\sum_{g<h}(X_{i,g}X_{i,h}+Y_{i,g}Y_{i,h})\) be the one-hot preserving mixer from Lemma~\ref{lem:invariance_onehot}.
Choose \(\beta = c/\sqrt{|S|}\) with \(0<c\le\pi/6\), and define \(\ket{\psi_1}=U_M(\beta)\,U_C(\gamma)\ket{d^{(0)}}\), with \(U_M(\beta)=e^{-i\beta B}\) and \(U_C(\gamma)=e^{-i\gamma H}\).
Then
\[
\langle\psi_1|H|\psi_1\rangle
=
f(d^{(0)})
-
(\bar\eta\,|S|-\kappa\lambda)\,\beta^{2}
+
\mathcal O(\beta^{3}),
\]
where \(\kappa := 16\,(K-1)\bigl(|\mathcal E_{\text{ML}}|+|\mathcal E_{\text{CL}}|\bigr)\), with \(\mathcal E_{\text{ML}}:=\mathcal{T}_{ml}\cap(S\times S)\) and \(\mathcal E_{\text{CL}}:=\mathcal{T}_{cl}\cap(S\times S)\).
Moreover, if \(\lambda \le \frac{\bar\eta\,|S|}{2\kappa}\), then
\[
\langle\psi_1|H|\psi_1\rangle
\le
f(d^{(0)})
-
\frac{1}{2}\,c^{2}\,\bar\eta
+
\mathcal O\!\bigl(|S|^{-1/2}\bigr),
\]
which yields an additive improvement proportional to \(\bar\eta\).
\end{theorem}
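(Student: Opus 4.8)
Since $H$ in \eqref{eqn:hamiltonian} is diagonal in the computational basis and $\ket{d^{(0)}}$ is a single basis state, $U_C(\gamma)\ket{d^{(0)}}=e^{-i\gamma H(d^{(0)})}\ket{d^{(0)}}$ is merely a global phase, so $\ket{\psi_1}=e^{-i\beta B}\ket{d^{(0)}}$ up to phase and $\langle\psi_1|H|\psi_1\rangle=\langle d^{(0)}|e^{i\beta B}He^{-i\beta B}|d^{(0)}\rangle$, independent of $\gamma$. I would expand the conjugation,
\[
e^{i\beta B}He^{-i\beta B}=H+i\beta[B,H]-\tfrac{\beta^{2}}{2}[B,[B,H]]+\mathcal O(\beta^{3}),
\]
and evaluate each order against $\ket{d^{(0)}}$. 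The zeroth order is $\langle d^{(0)}|H|d^{(0)}\rangle=f(d^{(0)})$, since feasibility of $d^{(0)}$ nullifies $H_{\text{one-hot}}$, $H_{\text{ML}}$, $H_{\text{CL}}$, $H_{\text{cross}}$. The first order vanishes because $H\ket{d^{(0)}}=f(d^{(0)})\ket{d^{(0)}}$ (equivalently, each $X_{i,g}X_{i,h}+Y_{i,g}Y_{i,h}$ has zero diagonal). So the entire content is the double-commutator term.

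\textbf{The second-order term.} The crux is to write $B\ket{d^{(0)}}$ explicitly. On the qubit pair $\ell(i,g),\ell(i,h)$ the block $X_{i,g}X_{i,h}+Y_{i,g}Y_{i,h}$ acts as twice the swap inside the single-excitation sector; applied to the one-hot string it annihilates every pair missing the assigned cluster $g^{(0)}_i$ and, for each of the $k-1$ pairs containing $g^{(0)}_i$, hops the excitation to the partner cluster. Hence $B\ket{d^{(0)}}=2\sum_{i\in S}\sum_{h\ne g^{(0)}_i}\ket{d^{(0)}_{i\to h}}$, a sum of mutually orthogonal one-hot strings all orthogonal to $d^{(0)}$. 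With diagonality of $H$ this gives $\langle d^{(0)}|B^{2}|d^{(0)}\rangle=4(k-1)|S|$ and $\langle d^{(0)}|BHB|d^{(0)}\rangle=4\sum_{i,h}H(d^{(0)}_{i\to h})$ (no cross terms), and since $\langle d^{(0)}|[B,[B,H]]|d^{(0)}\rangle=2f(d^{(0)})\langle d^{(0)}|B^{2}|d^{(0)}\rangle-2\langle d^{(0)}|BHB|d^{(0)}\rangle$, I would substitute
\[
H(d^{(0)}_{i\to h})=f(d^{(0)})+\bigl(\Delta_{i,h}-\Delta_{i,g^{(0)}_i}\bigr)+\lambda\,p_{i,h},
\]
where $p_{i,h}\ge 0$ counts exactly the must-link/cannot-link penalty units newly switched on by that single reassignment (the one-hot penalty stays $0$: by Lemma~\ref{lem:invariance_onehot} the hop keeps the string one-hot). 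The $f(d^{(0)})$ contributions cancel, leaving the order-$\beta^{2}$ change of the expectation equal to $4\beta^{2}\sum_{i\in S}\sum_{h\ne g^{(0)}_i}\!\bigl(\Delta_{i,h}-\Delta_{i,g^{(0)}_i}\bigr)+4\lambda\beta^{2}\sum_{i,h}p_{i,h}$.

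\textbf{Bounding the sums and finishing.} For the cost part, taking $d^{(0)}$ to be the current feasible labelling so that $\Delta_{i,g^{(0)}_i}=\Delta_{i,g^{\star}(i)}=0$, Definition~\ref{def:margin} gives $\Delta_{i,h}\le-\eta_i$ for every $h\ne g^{\star}(i)$ whenever $\eta_i>0$, hence $\sum_{h\ne g^{\star}(i)}\Delta_{i,h}\le-\eta_i$ and the cost sum is $\le-|S|\bar\eta$; this is the favourable, energy-lowering term that furnishes $\bar\eta|S|$. For the penalty part, a single hop $i\to h$ can re-activate only the ML/CL edges incident to $i$ inside $S$, contributing at most $2m_i+c_i$ with $m_i,c_i$ the ML/CL degrees of $i$ in $S$; summing over the $k-1$ hops of $i$ and then over $i\in S$, and using $\sum_{i\in S}m_i=2|\mathcal E_{\text{ML}}|$, $\sum_{i\in S}c_i=2|\mathcal E_{\text{CL}}|$, yields $\sum_{i,h}p_{i,h}\le 4(k-1)\bigl(|\mathcal E_{\text{ML}}|+|\mathcal E_{\text{CL}}|\bigr)$, so $4\lambda\beta^{2}\sum p_{i,h}\le\kappa\lambda\beta^{2}$ with $\kappa=16(k-1)(|\mathcal E_{\text{ML}}|+|\mathcal E_{\text{CL}}|)$. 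Collecting the two bounds gives the first display. For the second display I would set $\beta^{2}=c^{2}/|S|$, use $0<c\le\pi/6$ to keep the remainder at the claimed order, and invoke $\lambda\le\bar\eta|S|/(2\kappa)$ so that $(\bar\eta|S|-\kappa\lambda)\beta^{2}\ge\tfrac12 c^{2}\bar\eta$, whence $\langle\psi_1|H|\psi_1\rangle\le f(d^{(0)})-\tfrac12 c^{2}\bar\eta+\mathcal O(|S|^{-1/2})$.

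\textbf{Main obstacle.} The delicate part is the penalty accounting that pins down $\kappa$: one must verify that an individual one-hot-preserving hop never activates the one-hot penalty, raises each incident ML edge by exactly $2\lambda$ and each incident CL edge by at most $\lambda$, that the $k-1$ hops per sample together with the degree double-counting combine to $16(k-1)(|\mathcal E_{\text{ML}}|+|\mathcal E_{\text{CL}}|)$, and that the higher nested commutators of $B$ with $H$ (whose norms grow with $\lambda$, hence with $|S|$) are precisely what force the weaker $\mathcal O(|S|^{-1/2})$ remainder in the second display rather than $\mathcal O(|S|^{-3/2})$. A secondary subtlety is the cost-sum estimate when $S$ also contains constraint-violating samples with $\eta_i=0$: there I would restrict the favourable bound to the genuinely ambiguous subset (or absorb the rest into the error term), and I would keep the numerical constant multiplying $\bar\eta|S|$ transparent, since the mixer hops to all $k-1$ alternatives rather than only the best one.
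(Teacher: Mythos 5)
Your proposal follows the same overall route as the paper's proof: the global-phase argument disposing of \(U_C(\gamma)\), the Baker--Campbell--Hausdorff expansion of \(U_M^{\dagger}HU_M\), the vanishing of the first-order term because \(H\) is diagonal and \(B\) is purely off-diagonal, and the split of the second-order term into a favourable margin contribution and a penalty contribution bounded by \(\kappa\lambda\beta^{2}\). Where you genuinely diverge is in how the double-commutator is evaluated. The paper simply asserts two identities, namely \([B,[B,f]]=-2\sum_{i\in S}\eta_i\) and a per-edge nested-commutator norm bound \(\|[B,[B,\cdot]]\|\le 16(k-1)\); you instead compute \(B\ket{d^{(0)}}=2\sum_{i}\sum_{h\neq g^{(0)}_i}\ket{d^{(0)}_{i\to h}}\) explicitly and reduce \(\bra{d^{(0)}}[B,[B,H]]\ket{d^{(0)}}\) to a sum of energy differences \(H(d^{(0)}_{i\to h})-f(d^{(0)})\) over single one-hot-preserving hops. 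This buys two things. First, your penalty accounting (each hop raises an incident ML edge by \(2\lambda\), activates each incident CL edge for at most one target cluster, and handshake-summing degrees gives \(\sum_{i,h}p_{i,h}\le 4(k-1)(|\mathcal E_{\text{ML}}|+|\mathcal E_{\text{CL}}|)\)) actually \emph{derives} the constant \(\kappa=16(k-1)(|\mathcal E_{\text{ML}}|+|\mathcal E_{\text{CL}}|)\) that the paper only postulates. Second, your computation makes visible that the exact second-order cost coefficient is \(4\sum_{i}\sum_{h\neq g^{(0)}_i}\Delta_{i,h}\), which relates to \(\bar\eta|S|\) only through the inequality \(\Delta_{i,h}\le-\eta_i\) (and then with an extra factor tied to the \(k-1\) hop targets); the paper's stated equality \([B,[B,f]]=-2\sum_i\eta_i\) is not an identity under Definition~\ref{def:margin}, so the first display of the theorem should really be read as a one-sided bound with the constant in front of \(\bar\eta|S|\) made explicit. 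You flag exactly this, together with the treatment of samples with \(\eta_i=0\), which is the right place to be cautious: your version is the more defensible one, and the downstream conclusion (an additive improvement of order \(c^{2}\bar\eta\) under the stated condition on \(\lambda\)) survives with at most a change of absolute constants.
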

\begin{proof}
Since \(U_C(\gamma)=e^{-i\gamma H}\) is diagonal and \(\ket{d^{(0)}}\) is an eigenstate of \(H\), it contributes only a global phase, so
\(\langle\psi_1|H|\psi_1\rangle
= \bra{d^{(0)}}U_M^{\dagger}(\beta)\,H\,U_M(\beta)\ket{d^{(0)}}\).
The one-hot penalty \(H_{\text{one-hot}}\) annihilates \(\ket{d^{(0)}}\) and \(U_M(\beta)\) preserves \(\mathcal H_{\mathrm{one\text{-}hot}}\), so its contribution vanishes.
Using the Baker--Campbell--Hausdorff expansion,
\(U_M^{\dagger} H U_M
= H + i\beta[B,H]
+ \tfrac{\beta^{2}}{2}[B,[B,H]]
+ \mathcal O(\beta^{3})\).
Because \(H\) is diagonal and \(B\) flips two bits, \(\bra{d^{(0)}}[B,H]\ket{d^{(0)}}=0\).
Taking expectations,
\[
\langle\psi_1|H|\psi_1\rangle
= f(d^{(0)})
+ \frac{\beta^{2}}{2}\bra{d^{(0)}}[B,[B,f]]\ket{d^{(0)}}
+ \frac{\beta^{2}}{2}\bra{d^{(0)}}[B,[B,H_{\text{pen}}]]\ket{d^{(0)}}
+ \mathcal O(\beta^{3}).
\]
From the analysis of the linear term, \([B,[B,f]] = -2\sum_{i\in S}\eta_i\), giving the term \(-\bar\eta\,|S|\,\beta^{2}\).
For each must-link or cannot-link edge, a nested-commutator norm bound \(\|[B,[B,\cdot]]\|\le 16\,(K-1)\) holds; summing over \(|\mathcal E_{\text{ML}}|+|\mathcal E_{\text{CL}}|\) edges yields
\(\tfrac{\beta^{2}}{2}\bra{d^{(0)}}[B,[B,H_{\text{pen}}]]\ket{d^{(0)}} \le \kappa\lambda\beta^{2}\).
Combining these bounds gives the claim.
If \(\lambda \le \bar\eta\,|S|/(2\kappa)\) the coefficient of \(\beta^{2}\) is at most \(-\tfrac12\bar\eta\); with \(\beta=c/\sqrt{|S|}\) the remainder scales as \(\mathcal O(|S|^{-1/2})\), completing the proof.
\end{proof}

\subsection{Comparison with Classical Heuristic Algorithms}
\label{sec:b_heuristic}

Two variants of \emph{q-PCKMeans} are compared against three widely used constrained-clustering heuristics: COP-KMeans \citep{wagstaff_constrained_2001}, MPCKMeans \citep{bilenko2004integrating}, and PCKMeans \citep{basu_active_2004}. Across datasets, penalty-based heuristics such as PCKMeans and MPCKMeans can reduce SSE on some instances but may incur nonzero violation rates, whereas COP-KMeans enforces constraints when it returns a solution but may fail to return in some regimes. q-PCKMeans attains low violation rates and remains competitive in SSE; when the restricted search does not reach feasibility, residual violations are reported under the evaluation protocol.

\begin{table*}[htbp]
\centering
\caption{Performance of classical clustering heuristics with constraints.}
\label{tab:heuristics}
\begin{adjustbox}{max width=\textwidth, max totalheight=\textheight-2\baselineskip, keepaspectratio}
\begin{tabular}{llrrrrrrrrr}
\toprule
\multicolumn{2}{c}{}
  & \multicolumn{3}{c}{Cannot-link}
  & \multicolumn{3}{c}{Must-link}
  & \multicolumn{3}{c}{Both} \\
\cmidrule(lr){3-5} \cmidrule(lr){6-8} \cmidrule(lr){9-11}
Dataset & Method
  & SSE & \makecell{Runtime\\(s)} & \makecell{Violation\\Rate}
  & SSE & \makecell{Runtime\\(s)} & \makecell{Violation\\Rate}
  & SSE & \makecell{Runtime\\(s)} & \makecell{Violation\\Rate} \\
\midrule
Seeds     & cop-kmeans               & 887.30   & 1.2976 & 0.0000 & \multicolumn{3}{c}{No solution found}      & \multicolumn{3}{c}{No solution found}      \\
n = 210   & mpckmeans                & 705.93   & 0.0823 & 0.0000 & 1092.12  & 0.0689 & 0.0152 & 888.79   & 0.2333 & 0.0010 \\
m = 7     & pckmeans                 & 855.49   & 0.0479 & 0.0000 & 1011.51  & 0.0340 & 0.0162 & 964.02   & 0.0355 & 0.0163 \\
\(K\)=3   & q-PCKMeans (CA)          & 682.73& 43.2233& 0.0000 & 691.74& 38.2600& 0.0000 & 724.19& 45.1886& 0.0000 \\
          & q-PCKMeans (IG)          & 682.73& 62.6145& 0.0000 & 733.93& 65.2015& 0.0000 & 724.19& 62.0752& 0.0000 \\
\midrule
Haberman  & cop-kmeans               & \multicolumn{3}{c}{No solution found}      & \multicolumn{3}{c}{No solution found}      & \multicolumn{3}{c}{No solution found}      \\
n = 306   & mpckmeans                &5.18E+04  & 0.1279 & 0.2288 &5.15E+04  & 0.0969 & 0.0588 &5.02E+04  & 0.6596 & 0.0539 \\
d = 3     & pckmeans                 &5.04E+04  & 0.0593 & 0.0510 &5.07E+04  & 0.0425 & 0.0725 &5.04E+04  & 0.0400 & 0.0605 \\
\(K\)=2   & q-PCKMeans (CA)          &5.18E+04& 48.3982& 0.0000 &5.25E+04& 50.8746& 0.2939 &5.32E+04& 106.8068& 0.0000 \\
          & q-PCKMeans (IG)          &5.20E+04& 52.5752& 0.0000 &5.26E+04& 58.1897& 0.0088 &5.32E+04& 142.3847& 0.0000 \\
\midrule
Raisins   & cop-kmeans               & \multicolumn{3}{c}{No solution found}      & \multicolumn{3}{c}{No solution found}      & \multicolumn{3}{c}{No solution found}      \\
n = 900   & mpckmeans                &1.55E+12  & 0.8031 & 0.0742 &1.99E+12  & 0.6772 & 0.0289 &1.88E+12  & 8.6050 & 0.0167 \\
m = 7     & pckmeans                 &1.78E+12  & 0.4880 & 0.0371 &1.76E+12  & 0.4460 & 0.0291 &1.86E+12  & 0.5017 & 0.0293 \\
\(K\)=3   & q-PCKMeans (CA)          &1.35E+12& 45.8665& 0.0000 &1.37E+12& 77.6761& 0.1481 &1.71E+12&102.1515& 0.0000 \\
          & q-PCKMeans (IG)          &1.35E+12& 41.7418& 0.0000 &1.40E+12& 75.7384& 0.0000 &1.61E+12&104.7233& 0.0000 \\
\bottomrule
\end{tabular}
\end{adjustbox}
\end{table*}

To our knowledge, there are no published quantum methods that directly address pairwise-constrained \(k\)-means; existing NISQ studies focus on unconstrained \(k\)-means. The comparison therefore uses standard classical heuristics under identical inputs and constraint regimes. Exact ILP solvers and specialized meta-heuristics are omitted because they target different formulations and entail different computational assumptions relative to the simulation-based NISQ setting considered here.

\subsection{Clustering evaluation}
\label{sec:eval}
This section reports external clustering metrics (ARI, AMI, and purity) on six datasets with ground-truth labels. Following Section~\ref{sec:pairwise_aware}, solution quality is assessed by the \(k\)-means objective (SSE), while ARI, AMI, and purity provide an additional view of label agreement. The number of clusters matches the ground-truth label count.

\begin{table}[H]
  \caption{Clustering evaluation metrics on solutions of datasets under different constraint settings.}
  \label{tab:clustering_metrics}
  \vskip 0.1in
  \centering
  \renewcommand\cellalign{lc}
  \begin{small}
  \begin{sc}
  \setlength{\tabcolsep}{6pt}
  \resizebox{0.9\textwidth}{!}{%
    \begin{tabular}{llcccccc}
      \toprule
      \multirow[c]{1}{*}{Metrics} & \multirow[c]{1}{*}{Constraints} & \makecell[c]{Iris\\$K = 3$} & \makecell[c]{Seeds\\$K = 3$} & \makecell[c]{Haberman\\$K = 2$} & \makecell[c]{Moons\\$K = 2$} & \makecell[c]{Spiral\\$K = 2$} & \makecell[c]{Land\_mine\\$K = 5$} \\
      \midrule
      \multirow{4}{*}{ARI}
        & MSSC & 0.5912 & 0.6890 & 0.0015 & 0.4977 & 0.0005 & 0.0782 \\
        & q-PCKMeans$_{\mathrm{W}}$ (ML) & 0.6051 & 0.7208 & 0.0070 & 0.5864 & 0.0054 & 0.0986 \\
        & q-PCKMeans$_{\mathrm{W}}$ (CL) & 0.5941 & 0.6890 & 0.0097 & 0.5362 & 0.0017 & 0.0790 \\
        & q-PCKMeans$_{\mathrm{W}}$ (ML+CL) & 0.6051 & 0.7116 & 0.0058 & 0.5560 & 0.0054 & 0.0905 \\
      \midrule
      \multirow{4}{*}{AMI}
        & MSSC & 0.6302 & 0.6567 & 0.0006 & 0.3971 & 0.0004 & 0.1216 \\
        & q-PCKMeans$_{\mathrm{W}}$ (ML) & 0.6547 & 0.6857 & 0.0060 & 0.4791 & 0.0039 & 0.1402 \\
        & q-PCKMeans$_{\mathrm{W}}$ (CL) & 0.6428 & 0.6567 & 0.0130 & 0.4321 & 0.0013 & 0.1219 \\
        & q-PCKMeans$_{\mathrm{W}}$ (ML+CL) & 0.6547 & 0.6860 & 0.0043 & 0.4506 & 0.0039 & 0.1306 \\
      \midrule
      \multirow{4}{*}{Purity}
        & MSSC & 0.8067 & 0.8048 & 0.7353 & 0.7500 & 0.5133 & 0.3017 \\
        & q-PCKMeans$_{\mathrm{W}}$ (ML) & 0.8200 & 0.8952 & 0.7353 & 0.8833 & 0.5467 & 0.3550 \\
        & q-PCKMeans$_{\mathrm{W}}$ (CL) & 0.8133 & 0.8810 & 0.7353 & 0.8667 & 0.5200 & 0.3462 \\
        & q-PCKMeans$_{\mathrm{W}}$ (ML+CL) & 0.8200 & 0.8905 & 0.7353 & 0.8733 & 0.5467 & 0.3462 \\
      \bottomrule
    \end{tabular}%
  }
  \end{sc}
  \end{small}
\end{table}

\begin{table*}[ht]
  \centering
  \caption{SSE for different backends and shot counts from 256 to 2048.}
  \label{tab:sse_noise_experiments}
  \begin{sc}
  \resizebox{\textwidth}{!}{%
  \begin{tabular}{c|l|cccc|cccc|cccc}
    \toprule
    \multirow{2}{*}{Constraint}& \multirow{2}{*}{Backend}& \multicolumn{4}{c|}{Iris}
      & \multicolumn{4}{c|}{Seeds}
      & \multicolumn{4}{c}{Monk\_2} \\
    \cmidrule{3-14}
      &
      & 256 & 512 & 1024 & 2048
      & 256 & 512 & 1024 & 2048
      & 256 & 512 & 1024 & 2048 \\
    \midrule
      \multirow{4}{*}{CL} & Ideal       & 131.21 & 137.46 & 138.29 & 101.16& 861.91
& 855.96
& 853.45
& 682.73& 1547.44
& 1543.70
& 1547.80
& 1553.84\\
      & FLimaV2     & 156.64 & 156.31 & 131.88 & 131.57 & 883.29
& 882.59
& 851.02
& 847.75
& 1548.29
& 1545.62
& 1547.15
& 1545.67
\\
      & FJakartaV2  & 143.94 & 145.99 & 132.25 & 133.54 & 861.50
& 832.84
& 860.74
& 802.59
& 1547.84
& 1548.82
& 1545.71
& 1548.30
\\
      & FBrooklynV2 & 147.05 & 142.92 & 142.44 & 127.66 & 865.11& 841.89& 811.02& 862.77& 1549.52& 1549.39& 1547.02& 1547.87\\
    \midrule
      \multirow{4}{*}{ML} & Ideal       & 176.31 & 190.65 & 168.43 & 106.09& 969.90
& 920.77
& 900.90
& 733.93& 1547.73 & 1548.07 & 1548.06 & 1551.67\\
      & FLimaV2     & 178.23 & 168.47 & 191.07 & 166.91 & 973.36
& 926.03
& 918.26
& 891.71
& 1548.90 & 1549.37 & 1546.63 & 1548.98 \\
      & FJakartaV2  & 190.85 & 168.79 & 170.90 & 190.57 & 940.56
& 976.28
& 915.39
& 937.48
& 1548.63 & 1549.19 & 1548.73 & 1548.47 \\
      & FBrooklynV2 & 184.96 & 190.02 & 172.68 & 178.03 & 896.85& 910.17& 951.63& 953.02& 1546.19 & 1550.41 & 1549.18 & 1545.93 \\
    \midrule
      \multirow{4}{*}{Both} & Ideal       & 153.51 & 153.09 & 161.60 & 97.62& 809.92
& 838.85
& 814.06
& 724.19& 1543.12 & 1545.29 & 1542.75 & 1581.74\\
      & FLimaV2     & 160.46 & 176.15 & 160.32 & 141.74 & 849.84
& 848.48
& 831.94
& 844.48
& 1543.00 & 1545.00 & 1546.20 & 1549.23 \\
      & FJakartaV2  & 160.90 & 161.99 & 148.97 & 150.64 & 830.13
& 887.40
& 837.10
& 807.61
& 1545.82 & 1545.38 & 1544.11 & 1546.30 \\
      & FBrooklynV2 & 170.90 & 174.86 & 148.71 & 153.15 & 858.02& 872.79& 831.61& 826.52& 1547.33 & 1547.11 & 1547.47 & 1550.87 \\
    \bottomrule
  \end{tabular}
  }
  \end{sc}
\end{table*}

\subsection{Robustness to Noise}
\label{ssec:robustness}

Noise sensitivity is evaluated using AerSimulator with one ideal simulator and three IBM-Q noise models (FakeLimaV2, FakeJakartaV2, FakeBrooklynV2), varying the shot budget from 256 to 2048. Table~\ref{tab:sse_noise_experiments} summarizes SSE under each configuration. Increasing the shot budget generally reduces SSE across the tested noise models, with diminishing returns by 2048 shots. Minor non-monotonic variations occur for specific backend and dataset pairs.

\section{Penalty Weight \(\lambda\)}

Across both easy (Block~A) and hard (Block~B) instances, all three constraint types (\texttt{ml}, \texttt{cl} and \texttt{both}) and five random seeds, a plateau is observed over \(0.05 \le \lambda \le 5\): SSE deviates by less than 3\,\% from the reference value, and \(\lambda=50\) worsens SSE by about 1.5\,\%.
The number of ML and CL violations decreases as \(\lambda\) grows, while wall-clock time and logical-qubit usage vary by at most 5\,\% (Fig.~\ref{fig:runtime_qubits_vs_lambda}).
This plateau aligns with the additive-improvement window predicted by Theorem~\ref{thm:additive_p1_full},
\[
\lambda \;\lesssim\; \frac{\bar\eta\,|S|}{2\kappa}.
\]
Boxplots in Fig.~\ref{fig:sse_distribution_vs_lambda} show overlapping SSE distributions across \(\lambda\in\{0.05,0.1,0.5,1,2,5,10,20,50\}\).
Within the tested datasets and constraint densities, tuning \(\lambda\) in the range 0.05--5 is not required for stable runtime and low violation counts.

\begin{figure*}[htbp]
    \centering
    % Primera figura
    \begin{minipage}{0.48\linewidth}
        \centering
        \includegraphics[width=\linewidth]{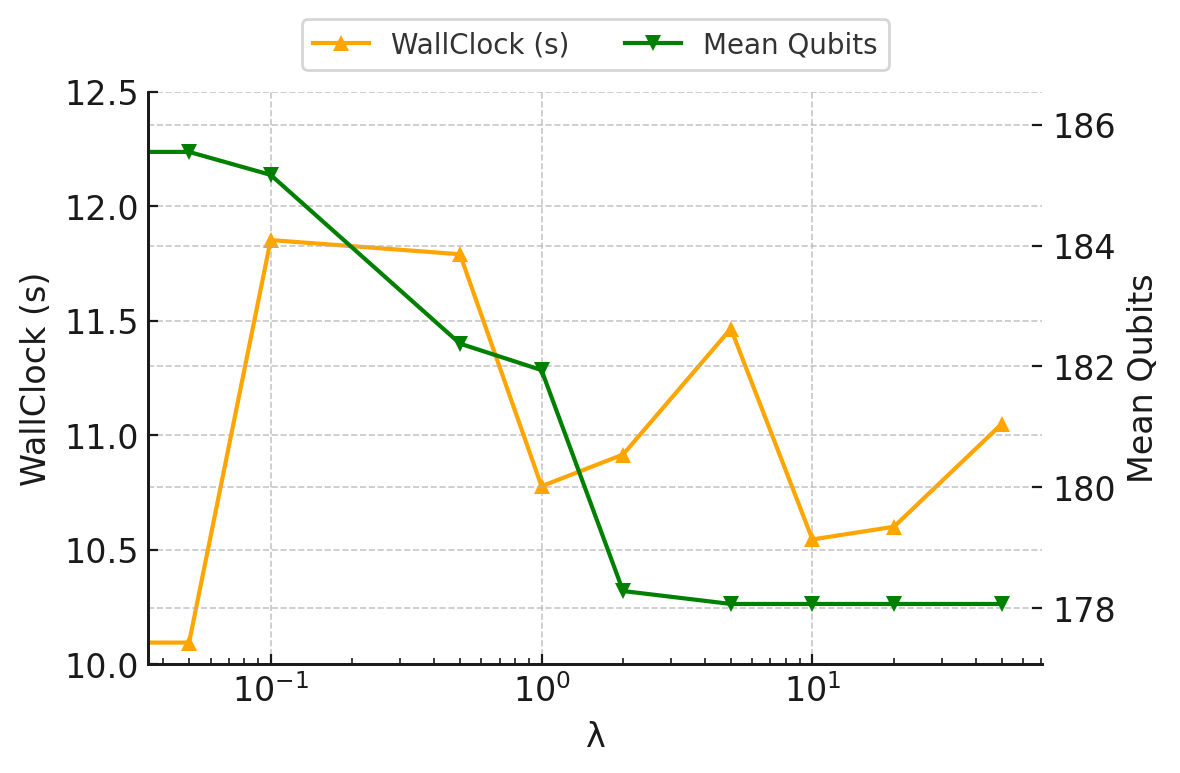}
        \caption{Wall-clock time and logical-qubit usage versus penalty weight \(\lambda\) (log scale). Both metrics vary by less than 5\,\% across the sweep.}
        \label{fig:runtime_qubits_vs_lambda}
    \end{minipage}\hfill
    % Segunda figura
    \begin{minipage}{0.48\linewidth}
        \centering
        \includegraphics[width=\linewidth]{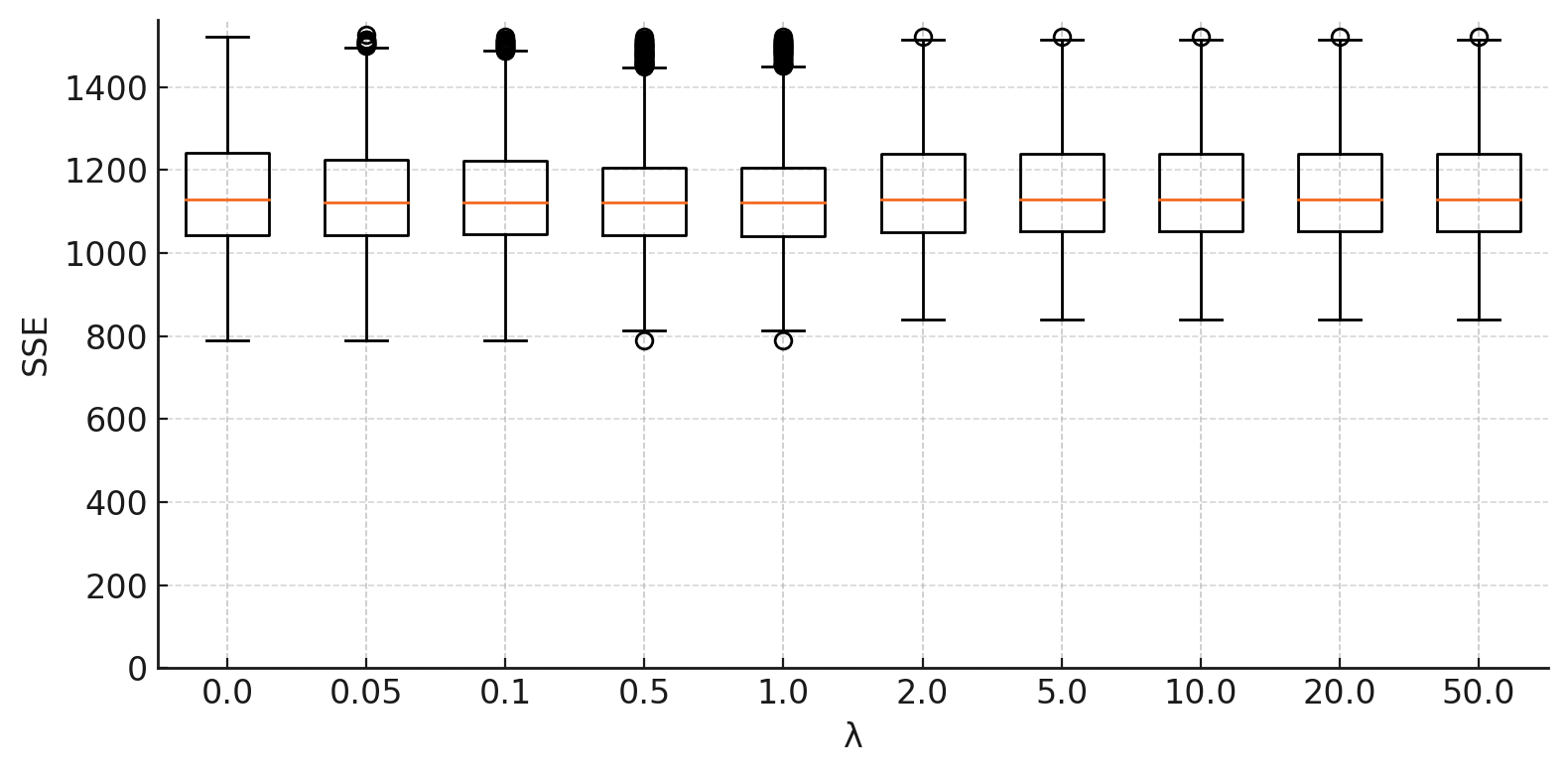}
        \caption{Boxplots of SSE distributions for each tested \(\lambda\) value. Medians and interquartile ranges overlap across \(\lambda\in\{0.05,0.1,0.5,1,2,5,10,20,50\}\).}
        \label{fig:sse_distribution_vs_lambda}
    \end{minipage}
\end{figure*}

\subsection{Real Hardware Implementation}
Real hardware runs are omitted because current NISQ devices remain noisy, which limits reliability for clustering experiments \citep{bharti2022noisy}. Median two-qubit error rates remain about 0.6\%, so a depth-1 QAOA circuit with at least $10^3$ entangling gates would succeed with probability below $10^{-3}$ per shot, making statistical comparisons impractical without many repetitions \citep{xue2021effects}. Current devices also limit the dataset sizes that can be encoded. Hardware noise is therefore emulated with calibrated AerSimulator models.

\subsection{Societal Impact of q-PCKMeans}

q-PCKMeans may support data analysis in domains such as genomics, medicine, and materials science. As with classical \(k\)-means, clustering may reproduce patterns present in the input data, including group disparities. Fairness-aware clustering variants, including group-balanced objectives \citep{backurs2019scalable} and demographic constraints \citep{chierichetti2017fair}, are possible directions for mitigation. Clustering sensitive data can also introduce privacy risks, including re-identification through linkage attacks. Differential privacy methods, such as adding calibrated noise to centroid updates \citep{dwork2006calibrating}, provide another direction. The effectiveness of fairness and privacy techniques in quantum-assisted clustering remains an open question.

% This section provides the background, additional proof and further experimental results for our constrained clustering approach. To further support the reproducibility of our results, we will release our experiment code upon acceptance, enabling other researchers to replicate and expand on our work.

\section{Extended related works}

\subsection{Quantum Computing Concepts}

Classically, the fundamental unit of digital information is the \emph{bit}, definitively holding a value of either 0 or 1. Quantum computation generalizes this notion through the \emph{quantum bit}, or \emph{qubit}, which can exist simultaneously in a coherent superposition of both basis states \citep{nielsen2010quantum,deutsch1985quantum}.  We work in the Noisy Intermediate-Scale Quantum (NISQ) era, characterized by devices with tens to a few hundred qubits that are subject to noise and decoherence; such machines cannot yet support fully fault-tolerant algorithms but may still offer quantum advantage for certain tasks \citep{preskill2018quantum,arute2019quantum}.

\paragraph{States and Control Primitives}
\label{subsec:states_control}
Formally, a single qubit lives in the two-dimensional Hilbert space  
\(
  \mathcal{H}_{2}=\mathrm{span}\{\ket{0},\ket{1}\},
\)
where \(\ket{0}\) and \(\ket{1}\) are the Pauli-\(Z\) eigenstates.  Any pure state can be written as
\[
  \ket{\psi}=\alpha\ket{0}+\beta\ket{1},\quad
  |\alpha|^2+|\beta|^2=1.
\]
Geometrically, \(\ket{\psi}\) corresponds to a point on the Bloch sphere with polar angles \((\theta,\phi)\), where
\(\alpha=\cos(\tfrac{\theta}{2})\) and \(\beta=e^{i\phi}\sin(\tfrac{\theta}{2})\).
Universal gate-based control uses single-qubit rotations 
\[
  R_{\alpha}(\theta)=\exp\!\bigl(-i\tfrac{\theta}{2}\,\sigma_{\alpha}\bigr),
  \quad \alpha\in\{x,y,z\},
\]
and two-qubit entanglers (\textsc{CNOT}) \citep{barenco1995elementary}.  
Here \(\sigma_{\alpha}\) are the Pauli matrices.  
The \emph{circuit depth} or the number of sequential gate layers serves as a proxy for runtime and accumulated noise on NISQ hardware. In our experiments, we emulate such low-depth circuits with a matrix-product-state (MPS) backend, which compactly represents the limited entanglement produced by shallow QAOA layers.

\paragraph{QUBO to Ising Mapping}

Many combinatorial problems can be expressed as a Quadratic Unconstrained Binary Optimization (QUBO) problem
\[
  \min_{z\in\{0,1\}^N}\; z^\top Q\,z \;+\; c^\top z,
\]
where \(Q\in\mathbb R^{N\times N}\) and \(c\in\mathbb R^N\).  To leverage quantum hardware, we map the binary variables \(z_i\in\{0,1\}\) to Ising spins \(s_i\in\{\pm1\}\) via
\(
  s_i = 2\,z_i - 1.
\)
Under this change of variables the QUBO objective becomes, up to an additive constant,
\(
  H_C \;=\;\sum_{i<j} J_{ij}\,Z_iZ_j \;+\;\sum_i h_i\,Z_i,
\)
where \(Z_i\) is the Pauli-\(Z\) operator on qubit \(i\), and the couplings \(J_{ij}\) and fields \(h_i\) are algebraically determined by the entries of \(Q\) and \(c\).  The resulting Hamiltonian \(H_C\) encodes the original cost landscape as its diagonal energies, ready for optimization by QAOA.

\paragraph{Quantum Approximate Optimization Algorithm}

The Quantum Approximate Optimization Algorithm (QAOA) is a hybrid quantum--classical method for approximately solving combinatorial optimization problems by finding the ground state of a diagonal cost Hamiltonian \(H_C\) on \(N\) qubits (Hilbert‑space dimension \(2^N\); here \(N = |S|\times K\)):
\[
  H_C = \sum_{z\in\{\pm 1\}^N} C(z)\,\ket{z}\!\bra{z}\,,
\]
where \(C(z)\) is the classical objective, such as an Ising-encoded QUBO, and \(\{\ket{z}\}\) denotes the computational basis. QAOA constructs a variational state by alternating between the cost and mixer unitaries
\(
  U_C(\gamma) = e^{-i\gamma H_C}, 
  \quad
  U_M(\beta) = e^{-i\beta H_M},
\)
where \(H_M\) is a mixer Hamiltonian that induces transitions between computational‑basis states.  At depth \(P\), the ansatz is
{\small\[\ket{\bm\gamma,\bm\beta}=\Bigl(U_M(\beta_P)\,U_C(\gamma_P)\Bigr)\,
    \cdots\,
    \Bigl(U_M(\beta_1)\,U_C(\gamma_1)\Bigr)\,
    \ket{+}^{\otimes N},
\]}
with \(2P\) variational parameters \(\{\gamma_1,\dots,\gamma_P;\,\beta_1,\dots,\beta_P\}\). A common choice for the mixer is the XY Hamiltonian
\(
  H_M = \sum_{i<j}\bigl(X_iX_j + Y_iY_j\bigr),
\)
which preserves Hamming-weight subspaces, thereby enforcing feasibility constraints. The QAOA loop proceeds by (i) preparing \(\ket{\bm\gamma,\bm\beta}\), (ii) measuring \(\langle H_C\rangle\) on repeated circuit runs, and (iii) updating \(\bm\gamma,\bm\beta\) via a classical optimizer such as COBYLA to maximise the expected cost.  Although theory guarantees monotonic improvement as \(P\to\infty\), in practice small \(P\) is chosen on NISQ devices to trade off circuit complexity against approximation quality.

\subsection{Extension of constrained clustering with NISQ-friendly QUBO formulation}
Quantum computing has attracted interest for clustering, with the goal of
obtaining quantum advantage or practical gains in hybrid pipelines
\cite{saiphet2021quantum,yung2024clustering,chen2025provably}. Early work casts
clustering objectives as Quadratic Unconstrained Binary Optimization (QUBO)
problems and solves them on quantum annealers \cite{zaiou2021balanced} or with
gate based QAOA, establishing feasibility \citep{boros2007local}. These
approaches are limited by nonconvex energy landscapes, which benefit from
careful warm starts or mixer design, and by the quadratic growth of one hot QUBO
encodings, which strains qubit counts and coherence times
\cite{kumari2018quantum,li2020coreset,egger2021warmstart,mirkarimi2024isingpenalty}.
To improve scaling, some methods compress the dataset before forming QUBOs; for
instance, qc-kmeans builds a constant size sketch and then solves small QUBOs
with shallow QAOA \citep{chumpitaz2025qc}. Empirical studies therefore remain
focused on small datasets or reduced coresets, with accuracy that often matches
or trails classical baselines. Despite progress in quantum clustering without
constraints, only a few annealer based proofs of concept address constrained
clustering \cite{cohen2020ising,richoux2023learning,seong2025hamiltonian}. These
works do not provide a systematic treatment of pairwise constraints, leaving
this setting largely open.

\end{document}